\newtheorem{proposition}{Proposition}
\begin{document}

\title{Goal Discovery with Causal Capacity for Efficient Reinforcement Learning}

\author{\authorblockN{
Yan Yu\authorrefmark{1},
Yaodong Yang\authorrefmark{2},
Zhengbo Lu\authorrefmark{3}, 
Chengdong Ma\authorrefmark{4},
Wengang Zhou\authorrefmark{5},
Houqiang Li\authorrefmark{5}}
\authorblockA{\authorrefmark{1}University of Science and Technology of China, Email: yy1140730050@mail.ustc.edu.com}
\authorblockA{\authorrefmark{2}Institute for AI, Peking University,
Email: yaodong.yang@pku.edu.cn}
\authorblockA{\authorrefmark{3}Institute of Artificial Intelligence,
Email: luzhenbo@iai.ustc.edu.cn}
\authorblockA{\authorrefmark{4}Institute for AI, Peking University,
Email: mcd1619@buaa.edu.cn}
\authorblockA{\authorrefmark{5}University of Science and Technology of China,
Email: \{zhwg,lihq\}@ustc.edu.cn}
}

\maketitle

\begin{abstract}
  Causal inference is crucial for humans to explore the world, which can be modeled to enable an agent to efficiently explore the environment in reinforcement learning.
  Existing research indicates that establishing the causality between action and state transition will enhance an agent to reason how a policy affects its future trajectory, thereby promoting directed exploration.
  However, it is challenging to measure the causality due to its intractability in the vast state-action space of complex scenarios.
  In this paper, we propose a novel \textbf{G}oal \textbf{D}iscovery with \textbf{C}ausal \textbf{C}apacity (GDCC) framework for efficient environment exploration. 
  Specifically, we first derive a measurement of causality in state space, \emph{i.e.,} causal capacity, which represents the highest influence of an agent's behavior on future trajectories.
  After that, we present a Monte Carlo based method to identify critical points in discrete state space and further optimize this method for continuous high-dimensional environments.
  Those critical points are used to uncover where the agent makes important decisions in the environment, which are then regarded as our subgoals to guide the agent to make exploration more purposefully and efficiently.
  Empirical results from multi-objective tasks demonstrate that states with high causal capacity align with our expected subgoals, and our GDCC achieves significant success rate improvements compared to baselines. 
\end{abstract}

\IEEEpeerreviewmaketitle

\section{Introduction}
Reinforcement Learning (RL) has proven to be an effective approach for training agents to perform a wide range of tasks, achieving notable success in domains such as games \cite{berner2019dota}, autonomous driving \cite{Xiao_Li_Wang_Peng_Wu_Zhao_Zhang_2023}, and robotics \cite{kalashnikov2018scalable}. 
In RL, an agent explores the environment, gathers data and maximizes accumulated reward to learn a high-quality policy. 
Generally, there exists a causal association between an agent's action and future trajectory. 
Modeling such causal association will enhance the agent's ability to explore and exploit the environment, which leads to efficient policy learning. 

\begin{figure}[t]
\centering 
\subfigure[Illustration scenarios]{
    \label{Instance}
    \includegraphics[width=0.45\columnwidth]{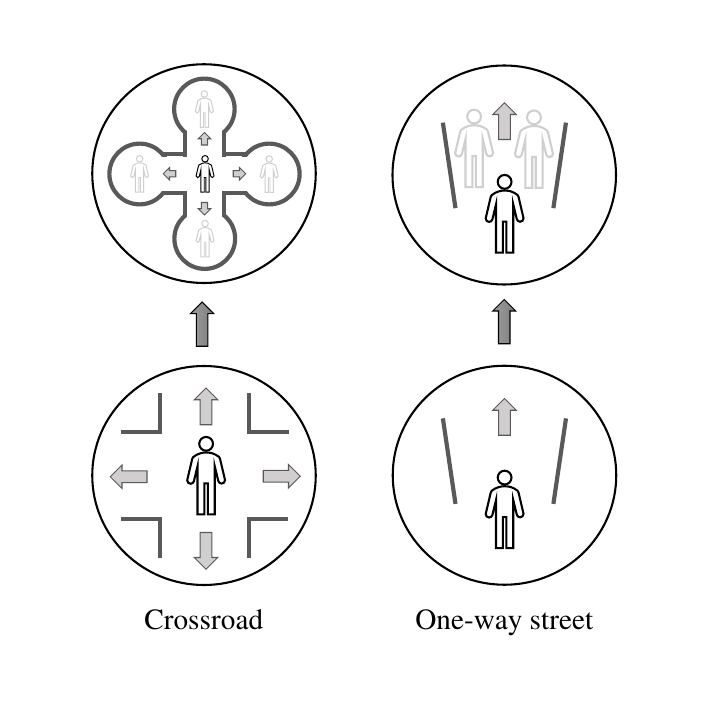}
    }
\subfigure[Causal capacity]{
    \label{DEMO}
    \includegraphics[width=0.48\columnwidth]{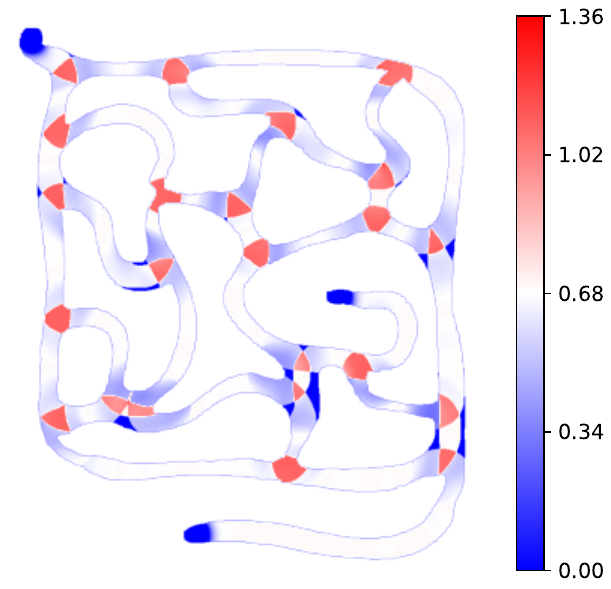}
    }
    \caption{(a) Two scenarios demonstrates that actions taken at different states will have varying impacts on the future. At the crossroads, an agent's different choices will result in different destinations, while on a one-way street, shifts to the left or right have little influence on the final destination. (b) The causal capacity results of all states in a demo maze environment, where red regions represent high causal capacity and blue regions indicate low causal capacity. It is observed that causal capacity effectively highlights the states where the agent can exert control over its future trajectory.}
    \label{fig:illustration}
\end{figure}
Causality plays a crucial role in decision-making. 
Humans typically desire that the outcomes of their actions align with their active decisions rather than being passively driven by the environment. 
Decisions made at critical points often have significant causal impacts on future outcomes. 
For instance, at a crossroad, the chosen direction may significantly affect the final destination, whereas shifts on a one-way road result in  only trivial changes in the final destination. 
For agents, learning actions with strong causal effects on the future is more valuable. 
The causal value of an action depends not only on the action itself but also on the state in which it is executed, similar to the example in Fig. \ref{Instance}. 
Existing research has focused on directly measuring the causal association between actions and state transitions,  enabling more comprehensive exploration \cite{seitzer2021causal}. 
However, due to the vastness of the state-action space, such measurement is complex and inaccurate. 
To address this, we resort to identify critical points in the environment like the red regions in Fig. \ref{DEMO}, where an agent's actions have a clear causal associations with the expected destination, facilitating more efficient exploration and policy optimization.

In this work, we propose a causality-aware framework that enables the agent to understand the association between states and actions. 
Drawing from the concept of maximum caliber in statistical physics \cite{Jaynes1957InformationTA,Dixit_Wagoner_Weistuch_Pressé_Ghosh_Dill_2018,Chvykov_Berrueta_Vardhan_Savoie_Samland_Murphey_Wiesenfeld_Goldman_England_2021}, we derive the causal capacity from Granger entropy \cite{Granger}. 
Causal capacity measures the maximum causal impact of the agent's actions on future trajectories.
It is defined as the entropy of the probability distribution over the state transition. 
Essentially, it measures the uncertainty in state transitions. 
A state with a larger causal capacity indicates more available choices for the agent.
To address the challenge of accurately measuring causal capacity, we propose a Monte Carlo-based method that requires only data collected through a random policy, allowing us to effectively measure the causal capacity of each state.

Based on the measurement of causal capacity, we can identify critical points in the environment that align with our expectations, selecting them as subgoals. 
By achieving these subgoals sequentially, the agent can explore the environment more effectively and train more efficiently.
To further utilize the subgoals during training, we preserve the sequential structure of the random policy data and train a prediction model along with a directed acyclic graph (DAG). 
This clarifies the causal associations among subgoals and improves the training efficiency and effectiveness of downstream tasks.

To evaluate our approach, we design multi-objective tasks in the MuJoCo maze environment \cite{2012MuJoCo} and the Habitat environment \cite{Savva_Kadian_Maksymets_Zhao_Wijmans_Jain_Straub_Liu_Koltun_Malik_etal,Szot_Clegg_Undersander_Wijmans_Zhao_Turner_Maestre_Mukadam_Chaplot_Maksymets_etal,Puig_Undersander_Szot_Cote_Yang_Partsey_Desai_Clegg_Hlavac_Min_etal}. 
In these tasks, the agent cannot simply memorize paths to the goal, but must understand the association between the environment and the task. 
Our empirical results demonstrate that the calculated subgoals align perfectly with our expectations. 
Furthermore, our method outperforms baseline algorithms, demonstrating the effectiveness of the GDCC framework in subgoal exploration.

\section{Related Work}
\subsection{Causal Reinforcement Learning}
Causal reinforcement learning \cite{Deng_Jiang_Long_Zhang_2023} aims to develop agents capable of comprehending their environment, solving complex tasks and improving the interpretability of decision-making processes. 
Many previous works focus on the advantages of CRL in task generalization, discovery of spurious correlations, representation learning, and data augmentation \cite{Zhang_Lyle_Sodhani_Filos_Kwiatkowska_Pineau_Gal_Precup_2020,diaz2013sensitivity,Swamy_Choudhury_Bagnell_Wu,sontakke2021causal,Buesing_Weber_Zwols_Racanière_Guez_Lespiau_Heess_2018}. 
Besides, encouraging the agent to discover the causal mechanisms underlying state transitions is crucial for facilitating exploration.
In~\cite{NEURIPS2022_7e9fbd01}, it models the causalities among environments variables (EV) to discover subgoals and high-quality hierarchical structures in complicated environment.
In~\cite{NEURIPS2022_a96368eb}, the problem is formulated into variational likelihood maximization with causal graph (CG) as latent variables. 
But both EV and CG requires strong prior knowledge about the environment.

On the other hand, conditional mutual information have been introduced as intrinsic reward to encourage agent to explore more diversely \cite{Eysenbach_Gupta_Ibarz_Levine_2018} or to detect the states of influence \cite{seitzer2021causal}.
However, it remains a significant challenge to accurately estimate the causality of the agent's behavior.
In this work, we aim to figure out an accurate measurement of the causality between the agent's behavior and the environment without relying on prior knowledge, so that the agent's policy can maximize its causal value. 
To achieve this, we employ causal discovery to identify subgoals, improving exploration and sampling efficiency.

\subsection{Goal-Conditioned Reinforcement Learning}
In the paradigm of Goal-Conditioned Reinforcement Learning (GCRL) \cite{NIPS1992_d14220ee,Kaelbling1993LearningTA}, complex tasks are decomposed into simpler tasks through subgoals and completed sequentially, similar to the problem-solving approach used by humans \cite{McGovern_Barto_2001,Bagaria_Konidaris_2020}. 
However, it is a non-trivial issue to generate subgoals in GCRL. 
Hindsight experience replay \cite{Andrychowicz_Wolski_Ray_Schneider_Fong_Welinder_McGrew_Tobin_Abbeel_Zaremba_2017} relabels achieved goals in the buffer as desired goals to better utilize data \cite{fang2019curriculum,bai2019guided}.
In~\cite{Chane-Sane_Schmid_Laptev_2021}, a value function is used to evaluate experience and selects intermediate states between the current state and the final goal as subgoals, optimizing the policy for selecting subgoals. 
However, these subgoals lack clear physical significance, and do not guarantee effective guidance in complex environments and tasks.
Other approaches encourage the agent to learn multiple effective skills \cite{simsek_Wolfe_Barto_2005,Mannor_Menache_Hoze_Klein_2004,Gehring_Synnaeve_Krause_Usunier_2021}, which, however, requires strong prior knowledge of the environment or are limited to simple, discrete environments.
In contrast, our method does not rely on any prior knowledge or expert data and can be applied in continuous state spaces. 
Our approach generates subgoals with causal significance through pretraining using the Monte Carlo method, without requiring high-quality offline data, relying instead on data sampled from a random policy.

\section{Preliminaries}
In this section, we provide the background knowledge of our method. We start by introducing the problem formulation of goal-conditioned reinforcement
learning, followed by detailed explanations of Structural Causal Models, Granger causality, and Transfer Entropy. These concepts serve as the theoretical foundations of our work.

\subsection{Problem Formulation}
The problem studied in this work is formulated as a goal-conditioned Markov Decision Process (MDP), represented as a six-tuple $\mathcal{M} =\left\langle\mathcal{S}, \mathcal{A}, \mathcal{G}, P, R, \gamma\right\rangle$. 
It includes a state space $\mathcal{S}$, an action space $\mathcal{A}$, and a subgoal space $\mathcal{G}$. 
The transition probability function $P$ defines the environment's intrinsic dynamics and is given by the conditional probability $p(s'\mid s, a)$. 
The reward function $R$ provides rewards based on the current state $s$, action $a$, subgoal $g$, and next state $s'$,  expressed as $r(s,a,g,s')$. $\gamma \in (0,1)$ is a discount factor. Our objective is to obtain an optimal policy $\pi:\mathcal{S},\mathcal{G}\rightarrow \mathcal{A}$ that maximizes the expected cumulative discounted reward $\mathbb{E}_\pi \left[\sum_{t=0}^\infty \gamma^t r_t \mid \pi \right]$. In this work, we focus on the sparse reward setting, where the agent receives zero rewards for most of the time.

\subsection{Structural Causal Model}

We use a Structural Causal Model (SCM) to represent the state transition in an MDP. 
As shown in Fig.~\ref{fig:SCG}, $U_{s_t}$ and $U_{a_t}$ represent independent noises or other unobserved confounders of the environment on state and action, respectively. 
The SCM consists of a set of random variables denoted by 
$\mathcal{V}=\{(S_i,U_{s_i},A_i, U_{a_i})\}_{i=1}^N$, and is structured using a directed acyclic graph (DAG).
Each node in the graph follows a conditional probability distribution $P(V_i\mid \text{Pa}(V_i))$, where $\text{Pa}(V_i)$ is the set of parents of $V_i$.

The causal model not only provides a clear structure of the variables but also facilitates the modeling of causal interventions \cite{stone2020causal}. 
In reinforcement learning, an agent's policy serves as an intervention mechanism, denoted as $I=\text{do}(A:=\pi(a\mid s))$, where the $\text{do}(\cdot)$ operator specifies fixing the value of a variable in the intervention process. 
Generally, intervening with different policies results in different distributions of the next state, \emph{i.e.,} $P^{\text{do}(A:=\pi_i(a\mid s))}(S'\mid S)\neq P^{\text{do}(A:=\pi_j(a\mid s))}(S'\mid S)$, but state transition remains the same, \emph{i.e.,} $p^{\text{do}(A:=\pi_i(a\mid s))}(s'\mid s, a) = p^{\text{do}(A:=\pi_j(a\mid s))}(s'\mid s, a)$.

\begin{figure}
    \centering
    \includegraphics[width=0.85\columnwidth]{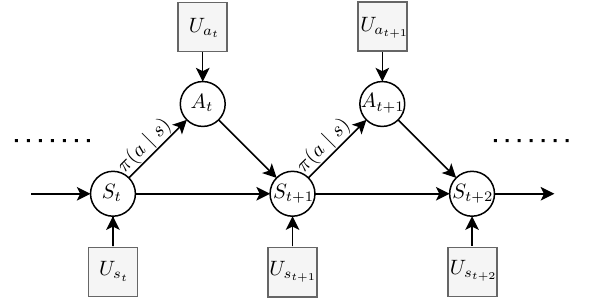}
    \caption{The structural causal model (SCM) illustrates the state transition from $S_t$ to $S_{t+1}$ as $S_{t+1}=f(S_t,A_t,U_{s_{t+1}})$. In the SCM, the policy \(\pi\) is highlighted as the causal intervention mechanism in reinforcement learning.
    }
    \label{fig:SCG}
\end{figure}

\subsection{Granger Causality and Transfer Entropy}
According to Granger causality \cite{Granger}, if including past information of variables $\mathbf{X}$ and $\mathbf{Y}$ helps improves the prediction of $\mathbf{Y}'$ compared to predicting $\mathbf{Y}'$ based solely on its past information, then ``$\mathbf{X}$ Granger-causes $\mathbf{Y}$ in Granger causality". Transfer entropy \cite{Schreiber_2002} follows a similar concept, measuring the directed information transfer between joint processes. For Gaussian variables, Granger causality and transfer entropy are equivalent \cite{Barnett_Barrett_Seth_2009}. The expression of transfer entropy is as follows:
\begin{equation}
\mathcal{T}(\mathbf{X}\rightarrow \mathbf{Y}) = \mathcal{H}(\mathbf{Y}'\mid \mathbf{Y})-\mathcal{H}(\mathbf{Y}'\mid \mathbf{Y},\mathbf{X}).
\end{equation}
Transfer entropy measures the degree to which $\mathbf{X}$ reduces uncertainty in predicting the future of $\mathbf{Y}$. Since its introduction \cite{Schreiber_2002}, transfer entropy has become a widely recognized tool for analyzing causal relationships in nonlinear systems \cite{HLAVACKOVASCHINDLER_PALUS_VEJMELKA_BHATTACHARYA_2007}.

\section{Method}
In this section, we introduce the derivation of causal capacity and the details of the GDCC framework, which includes subgoal generation and prediction. 
First, we analyze the causal association between the agent's actions and their outcomes, deriving the action causality measurement and defining causal capacity based on transfer entropy.
Given the challenges of measuring causal capacity without prior knowledge of the environment's dynamics, we design a Monte Carlo method to estimate causal capacity and further optimize this estimation for continuous, high-dimensional environments using a clustering algorithm.
Next, we identify the critical points with the highest causal capacity, which are used to guide the agent in purposeful exploration.
Finally, we propose a prediction model to obtain the optimal subgoal for the current state, effectively simplifying the task and reducing the exploration space.
The overall framework is shown in Fig.~\ref{fig:Overall}.

\begin{figure}[htb]
    \centering
    \includegraphics[width=0.95\columnwidth]{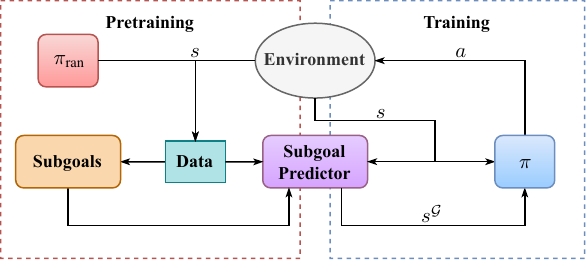}
    \caption{Overall framework of GDCC.}
    \label{fig:Overall}
\end{figure}

\subsection{Action Causality Measurement}
If performing action $a$ in state $s$ reduces the uncertainty of transition from $s$ to next state $s'$, it implies that there exists a state $s_i$ or a set of states $\mathbf{S}_i$ with a higher probability of transitioning from $s$ to $s_i$ or $\mathbf{S}_i$. It is reasonable to assume that there is a causal association between executing the action $a$ and the increased transition probability towards $s_i$ or $\mathbf{S}_i$, which can be quantified using transfer entropy as follows:
\begin{equation}
\begin{split}
\mathcal{T}(A\rightarrow &S\mid {S=s,\text{do}(A=a)})=\\
&\mathcal{H}(S'\mid S=s)-\mathcal{H}(S'\mid S=s,\text{do}(A=a)),
\end{split}
\end{equation}
where $\mathcal{H}(S'\mid S=s)$ denotes the entropy of the non-interventional state transition distribution of state $s$, and $\mathcal{H}(S'\mid S=s,\text{do}(A=a))$ represents the entropy of the state transition distribution with the agent's action set to \( a \).
\subsection{Causal Capacity Measurement}
We are interested in identifying the state that maximizes the diversity of the state transition while minimizing its uncertainty after taking actions. However, since directly measuring action causality requires estimating the state transition probabilities of all actions executed in all states, which is computationally infeasible in continuous state and action spaces, we need another variable to evaluate the causality. We propose the following propositions regarding the upper and lower bounds of action causality:

\begin{proposition}
\label{prop:bound}
    If the entropy of the non-interventional state transition distribution of state $s$ can be calculated, then the upper and lower bounds of the transfer entropy for any action $a$ are given as follows:
\begin{equation}
\begin{aligned}
\mathcal{H}(S'\mid S=s)&\geq\mathcal{T}(A\rightarrow S\mid {S=s,\rm{do}(A=a)})\\
&\geq \min_{a}\left(1-\frac{1}{p(a\mid s)}\right)\mathcal{H}(S'\mid S=s).
\label{Eq:Proposition-1}
\end{aligned}
\end{equation}
\end{proposition}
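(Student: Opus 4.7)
The plan is to unpack the definition $\mathcal{T}(A\rightarrow S\mid S=s,\text{do}(A=a))=\mathcal{H}(S'\mid S=s)-\mathcal{H}(S'\mid S=s,\text{do}(A=a))$ and bound the interventional conditional entropy from above and below. The upper bound on $\mathcal{T}$ is immediate: since Shannon entropy is non-negative, we have $\mathcal{H}(S'\mid S=s,\text{do}(A=a))\geq 0$, so $\mathcal{T}\leq\mathcal{H}(S'\mid S=s)$. This handles the left inequality with no real work.

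For the lower bound, I would start from the marginalization identity that links the interventional transition kernels to the observational one under the behavior policy,
\begin{equation*}
p(s'\mid s)=\sum_{a}p(a\mid s)\,p(s'\mid s,\text{do}(A=a)).
\end{equation*}
The function $P\mapsto\mathcal{H}(P)$ is concave in the distribution $P$, so applying Jensen's inequality to this convex combination yields
\begin{equation*}
\mathcal{H}(S'\mid S=s)\;\geq\;\sum_{a}p(a\mid s)\,\mathcal{H}(S'\mid S=s,\text{do}(A=a)).
\end{equation*}
Since every term in the sum is non-negative, I can drop all summands except the one corresponding to the specific action $a$ of interest, giving
\begin{equation*}
\mathcal{H}(S'\mid S=s)\;\geq\;p(a\mid s)\,\mathcal{H}(S'\mid S=s,\text{do}(A=a)),
\end{equation*}
which rearranges to $\mathcal{H}(S'\mid S=s,\text{do}(A=a))\leq \mathcal{H}(S'\mid S=s)/p(a\mid s)$. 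Substituting this into the defining equation of $\mathcal{T}$ produces $\mathcal{T}\geq \bigl(1-\tfrac{1}{p(a\mid s)}\bigr)\mathcal{H}(S'\mid S=s)$ for the specific action $a$, and replacing the coefficient by $\min_{a}(1-1/p(a\mid s))$ (which only weakens the bound, since the factor is non-positive) gives exactly the stated inequality.

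The only delicate point is justifying the concavity step cleanly. Entropy concavity is standard, but one should be careful that the interventional state-transition kernels $p(s'\mid s,\text{do}(A=a))$ in the SCM truly reduce to the observational $p(s'\mid s)$ when averaged against $p(a\mid s)$; this is where the earlier remark that state transitions are policy-invariant, $p^{\text{do}(A:=\pi)}(s'\mid s,a)=p(s'\mid s,a)$, does the heavy lifting. Beyond that, everything is algebraic rearrangement, so I anticipate the argument fitting comfortably in a few lines.
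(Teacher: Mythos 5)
Your proposal is correct and follows essentially the same route as the paper's proof: the upper bound from non-negativity of the interventional entropy, and the lower bound from the marginalization $p(s'\mid s)=\sum_a p(a\mid s)p(s'\mid s,a)$ plus Jensen's inequality (the paper applies concavity of $-x\log x$ pointwise and sums over $s'$, which is the same concavity-of-entropy fact you invoke), followed by dropping non-negative terms and dividing by $p(a\mid s)$. The only cosmetic difference is that the paper works with the pointwise concave function rather than the entropy functional directly, and it additionally notes when the bounds are tight, but the argument is the same.
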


Furthermore, when we control the policy such that the agent selects actions with the most causal impact, the upper and lower bounds of action causality are:

\begin{proposition}
\label{prop:max-bound}
    When we choose an action to maximize the transfer entropy, and $\mathcal{H}(S'\mid S)$ can be calculated, the upper and lower bounds of the transfer entropy are as follows:
\begin{equation}
\mathcal{H}(S'|S=s)\geq\max_{a}\mathcal{T}(A\rightarrow S\mid {S=s,\rm{do}(A=a)})\geq0.
\label{Eq:Proposition-2}
\end{equation}
\end{proposition}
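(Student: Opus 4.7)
The plan is to derive both bounds with little additional effort once Proposition~\ref{prop:bound} is in hand. For the upper bound, Proposition~\ref{prop:bound} already guarantees that $\mathcal{H}(S'\mid S=s)\geq \mathcal{T}(A\to S\mid S=s,\text{do}(A=a))$ holds for every action $a$, and since the left-hand side is independent of $a$, taking the maximum over $a$ on the right-hand side preserves the inequality and delivers the upper bound immediately.

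For the non-negativity of the maximum, I would use a simple averaging trick: the maximum of a real-valued function is at least its weighted average under any probability distribution, so it suffices to exhibit one action distribution under which the expected transfer entropy is non-negative. The natural choice is the observational (behavior) policy $p(a\mid s)$. Under the SCM of Fig.~\ref{fig:SCG}, the transition noise $U_{s_{t+1}}$ is independent of $A_t$ given $S_t$, so $p^{\text{do}(A=a)}(s'\mid s)=p(s'\mid s,a)$ and therefore $\mathcal{H}(S'\mid S=s,\text{do}(A=a))=\mathcal{H}(S'\mid S=s,A=a)$. Averaging the transfer entropy against $p(a\mid s)$ then collapses to the conditional mutual information $I(S';A\mid S=s)$, which is non-negative by the standard ``conditioning reduces entropy'' inequality; hence the maximum over $a$ is at least this average and is also non-negative.

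The main obstacle is conceptual rather than technical: one must be careful when substituting the interventional entropy by the observational conditional entropy, since this step relies on the absence of hidden confounding between $A_t$ and $S_{t+1}$ given $S_t$, which is precisely what the SCM structure provides. Once that identification is made explicit, the remaining manipulations are a few lines of standard information theory and apply uniformly in discrete and continuous settings provided the usual regularity conditions for differential entropy hold.
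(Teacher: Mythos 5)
Your proposal is correct and takes essentially the same route as the paper: the upper bound is inherited from Proposition~\ref{prop:bound} exactly as you say, and your lower-bound argument rests on the same key inequality $\mathcal{H}(S'\mid S=s)\geq\sum_{a}p(a\mid s)\,\mathcal{H}(S'\mid S=s,\mathrm{do}(A=a))$ that the paper establishes via Jensen's inequality in the proof of Proposition~\ref{prop:bound} (Eq.~\ref{eq:non}), which is precisely your statement that the conditional mutual information $I(S';A\mid S=s)$ is non-negative once the interventional entropy is identified with the observational one. The only cosmetic difference is packaging: the paper writes $\max_a\mathcal{T}_{A\rightarrow S}=\mathcal{H}(S'\mid S=s)-\min_a\mathcal{H}(S'\mid S=s,\mathrm{do}(A=a))$ and bounds the minimizing entropy $\mathcal{H}_{a_n}$ by the average, whereas you apply ``maximum $\geq$ average'' directly to the transfer entropy and name the average as mutual information.
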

\noindent The proof of the proposition can be found in the supplementary material.

According to Eq.~\eqref{Eq:Proposition-1} and Eq.~\eqref{Eq:Proposition-2}, $\mathcal{H}(S'\mid S)$ represents the maximum potential causal influence that an agent’s action can have in state $s$. Meanwhile, under a controlled policy, the lower bound of action causality becomes independent of $\mathcal{H}(S'\mid S)$. Therefore, it is reasonable to use the entropy of the non-interventional state transition distribution as a measure of the maximum causal value of a state. We define the causal capacity as follows:
\begin{equation}
\mathcal{C}(s) = \mathcal{H}(S'\mid S=s).
\end{equation}

Through estimating the causal capacity of each state, we can identify states where the agent's actions have a substantial impact on the next state. 
However, accurately estimating causal capacity remains challenging, especially when the transition function $P$ is not fully understood.
Predicting the entire state transition distribution without action constraints is not feasible, even within the framework of Model-Based Reinforcement Learning (MBRL). The typical approach is to predict the next state based on a specific state-action pair, rather than relying solely on the state.

To overcome this challenge, we propose a Monte Carlo-based method for measuring causal capacity, making it applicable to real-world problems. Additionally, we incorporate a clustering algorithm to extend GDCC to continuous, high-dimensional environments.

\subsubsection{Monte Carlo Based Measurement}
Based on the definition of entropy, the causal capacity can be factorized as follows:
\begin{equation}
\mathcal{C}(s)=-\sum_{s_i \in S'}p(s_i\mid s)\log p(s_i\mid s).
\end{equation}
Since calculating $p(s'\mid s)$ requires knowledge of the state transition distribution under all actions, we propose using a Monte Carlo method to estimate it. To this end, we introduce the following proposition.  
The proof of this proposition can be found in the supplementary material.

\begin{proposition}
\label{prop:random}
The non-interventional state transition probability is equivalent to the state transition probability under the intervention of a random policy:
    \begin{equation}
        p(s' \mid s)=p^{\rm{do}(A:=\pi_{\text{ran}})}(s'\mid s),
    \end{equation}
    where $\pi_{\text{ran}}$ denotes the random policy. 
\end{proposition}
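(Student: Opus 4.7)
The plan is to unpack both sides of the claimed equality by marginalizing over the action variable and then to appeal to the invariance of the transition mechanism under policy intervention that was already highlighted in Section III.B. First I would expand the left-hand side using the law of total probability,
\begin{equation*}
p(s'\mid s) = \sum_{a\in\mathcal{A}} p(s'\mid s,a)\, p(a\mid s),
\end{equation*}
and identify $p(a\mid s)$ with the distribution dictated by the SCM of Fig.~\ref{fig:SCG}, in which the node $A_t$ receives input only from the exogenous noise $U_{a_t}$ and has no endogenous behavior policy attached. Under the natural modeling assumption that $U_{a_t}$ induces a uniform distribution over $\mathcal{A}$, this observational action distribution is precisely the random policy $\pi_{\text{ran}}(a\mid s)$.

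Second, I would expand the interventional side using the truncated factorization rule of do-calculus: the intervention $\text{do}(A:=\pi_{\text{ran}})$ replaces the assignment mechanism for $A$ but leaves every other mechanism in the SCM untouched, so
\begin{equation*}
p^{\text{do}(A:=\pi_{\text{ran}})}(s'\mid s) = \sum_{a\in\mathcal{A}} p^{\text{do}(A:=\pi_{\text{ran}})}(s'\mid s,a)\,\pi_{\text{ran}}(a\mid s).
\end{equation*}
Here I would explicitly invoke the invariance statement already given in the preliminaries, namely $p^{\text{do}(A:=\pi)}(s'\mid s,a) = p(s'\mid s,a)$ for every policy $\pi$. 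Combining this with the expansion above and identifying both sums term by term yields the equality. The continuous-action case follows by replacing the summations with integrals over $\mathcal{A}$ with respect to the appropriate reference measure, with no substantive change to the argument.

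The principal obstacle is interpretive rather than computational: one must justify that the unspecified ``non-interventional'' conditional $p(a\mid s)$ coincides with $\pi_{\text{ran}}(a\mid s)$. Intuitively, this is what it means to let the exogenous noise drive $A_t$ when no behavior policy has been imposed, and it is consistent with the SCM diagram in which $A_t$ has only $U_{a_t}$ as a parent. I would state this assumption explicitly at the outset, note that without it the proposition fails in general (since an implicit behavior policy could bias the observational action marginal), and then show that once it is granted, the identity is a one-line consequence of Pearl's rule for interventions on root-level variables. No concentration or asymptotic argument is required, because the claim is a distributional equality rather than a statement about empirical estimates; the subsequent Monte Carlo procedure uses this proposition only to legitimize sampling from $\pi_{\text{ran}}$ as an unbiased way to estimate $p(s'\mid s)$ and hence the causal capacity $\mathcal{C}(s)$.
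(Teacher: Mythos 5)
Your proposal is correct and follows essentially the same route as the paper's proof: decompose $p(s'\mid s)$ by the law of total probability, assume the non-interventional action distribution is uniform (i.e., coincides with $\pi_{\text{ran}}$), and substitute to recover the interventional transition probability. You are somewhat more explicit than the paper in invoking the invariance $p^{\text{do}(A:=\pi)}(s'\mid s,a)=p(s'\mid s,a)$ from the preliminaries and in flagging the uniform-observational-action assumption (which the paper also acknowledges, noting the random policy can be adjusted if the non-interventional action distribution is not uniform), but the substance is the same.
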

Therefore, it is reasonable to collect trajectory data $\mathcal{D}$ with a random policy $\pi_{\text{ran}}$ to approximate the non-interventional transition probabilities. 
We count the frequencies of each state $N(S=s)$ in $\mathcal{D}$. 
Accordingly, the causal capacity can then be represented as:  
\begin{equation}
\mathcal{C}(s)= -\sum_{s_i}p^{\text{do}(A:=\pi_{\text{ran}})}(s_i\mid s)\log p^{\text{do}(A:=\pi_{\text{ran}})}(s_i\mid s).
\end{equation}
where $p^{\text{do}(A:=\pi_{\text{ran}})}(s_i\mid s)\approx\frac{N(S'=s_i\mid S=s)}{N(S=s)}$.

\subsubsection{Clustering Based Measurement}
In complex environments or real-world scenarios, the state space is often high-dimensional and continuous. As a result, a state $s$ may only be visited once, making frequency-based statistics ineffective. 

Considering continuous state spaces, a small change in state, such as $s+\Delta s$, can be nearly identical to the original state $s$ in the physical world. In our study of causality, our primary objective is to determine whether the agent's actions will have significant impact on its future, which requires a clear difference in the physical world. 
Therefore, we cannot rely solely on state-action transitions as the basis for estimating the set of possible next states $S'$ for $s$. Instead, we propose partitioning the sampled state data $\mathbf{S}_{\mathcal{D}}$ into distinct state sets, based on $s$, as follows:
\begin{equation}
\left\{
\begin{aligned}
    &\mathbf{S}_{\text{nei}}(s)=\{\hat{s}\mid d(s,\hat{s})<\tau_{\text{nei}}\}, \\
    &\mathbf{S}_{\text{adj}}(s)=\{\hat{s}\mid \tau_{\text{nei}}\leq d(s,\hat{s})<\tau_{\text{adj}}\}, \\
    &\mathbf{S}_{\text{out}}(s)=\{\hat{s}\mid \tau_{\text{adj}}\leq d(s,\hat{s})\},
\end{aligned}
\right. 
\end{equation}
where $d(\cdot,\cdot)$ is a distance function, which could be a standard distance function, such as Euclidean distance or Manhattan distance, or a neural network trained on specific metrics. 
$\tau$s are the distance threshold set based on the scope of agent's actions. 
The set of neighboring states $\mathbf{S}_{\text{nei}}(s)$ includes those states within a distance of $\tau_{\text{nei}}$, which are considered to be in the same physical state with $s$. 
$\mathbf{S}_{\text{out}}(s)$ includes states that are too far away from $s$ to be reached by a single action and can only be accessed after multiple state transitions. 
$\mathbf{S}_{\text{adj}}$ denotes the expected set of next states for $s$, consisting of states that are adjacent and reachable in a single action.
We define the general set of next states for \(s\) as $
\tilde{\mathbf{S}}'(s)=\mathbf{S}_{\text{adj}}(s)$.

To estimate the probability distribution of different state transitions in $\tilde{\mathbf{S}}'(s)$, we measure the distance between each pair of states in $\tilde{\mathbf{S}}'(s)$ using $d(\cdot, \cdot)$, and then apply the Agglomerative Clustering algorithm \cite{sklearn_api}. This process partitions $\tilde{\mathbf{S}}'(s)$ into $N$ clusters, \emph{i.e.,} $\textbf{Cluster}(\tilde{\mathbf{S}}'(s))=\{\tilde{\mathbf{S}}_1,\tilde{\mathbf{S}}_2,\ldots,\tilde{\mathbf{S}}_N\}$. We then use the frequency $|\tilde{\mathbf{S}}_i|$ of each state cluster to approximate the probability of each state transition type. Based on this clustering, we can calculate the causal capacity for the state \(s\):
\begin{equation}
\mathcal{C}_{\text{clu}}(s)=\sum_{\mathbf{\tilde{S}}_i\in \mathbf{\tilde{S}}'(s)}p_{\text{clu}}(\mathbf{\tilde{S}}_i\mid s)\log p_{\text{clu}}(\mathbf{\tilde{S}}_i\mid s),
\end{equation}
where $p_{\text{clu}}(\mathbf{\tilde{S}}_i\mid s)=\frac{|
\mathbf{\tilde{S}}_i|}{|\mathbf{\tilde{S}}'(s)|}$.

The maximum causal capacity of a state $s$ is relative to the number of clusters of $\tilde{\mathbf{S}}'(s)$. States with a small number of clusters are constrained, preventing the agent from making its own choices, or they are situated at a larger state in physics where agent cannot transition to another state in a single step.

The primary reason to use clustering algorithms is that predicting the distribution without intervention requires fully sampling the state transition under all actions and then estimating the distribution for each next state. 
This contrasts with typical Model-Based Reinforcement Learning (MBRL) tasks, where the goal is often to predict the mean of the next states. 
In our case, however, we are particularly interested in accurately estimating the variance of the next state distribution.
The standard MBRL approaches for estimating variance in continuous state spaces may not always meet our requirement.
Therefore, we adopt a distance-based statistical method combined with a clustering algorithm.
Additionally, we can design a distance function $d(\cdot, \cdot)$ to characterize the environment based on state representation. 
By incorporating more information about the environment, including temporal and semantic information, we improve the representation and distinguishability of clustering algorithm in state transitions. Empirical results of this clustering approach can be found in the supplementary material.

\subsection{Subgoal Prediction}
The purpose of calculating the causal capacity is to find the most suitable subgoals in the environment. Once the causal capacity of each state has been computed, we can select those states whose causal capacity exceeds a certain threshold as subgoals. By constraining the agent's actions to these subgoal states, we maximize the likelihood that its future trajectory will be controlled and lead to the desired outcomes.

However, in some cases, the agent may not be able to explore the entire environment through random policy. This could prevent us from obtaining the causal capacity of all states. In such scenarios, we employ the Go-Explore approach \cite{Ecoffet_Huizinga_Lehman_Stanley_Clune_2019}. It involves training a model to achieve the latest subgoal and then exploring with random policy to complete the exploration of the entire environment.

Once the agent has executed actions in the environment, the next challenge is how it can select the optimal subgoal for any given state. To address this, we propose a prediction model that identifies the most suitable subgoal for each state. Its structure is shown in Fig. \ref{fig:high_level}. The prediction model consists of two key components: (1) an encoder and a decoder are self-supervised pretrained for embedding states and distinguishing subgoals, (2) a predictor for subgoal prediction. The encoder $p_{\theta}(z\mid s)$ takes states $s$ and subgoals $s^{\mathcal{G}}$ as input, projecting them into latent space as $z$ and $z^{\mathcal{G}}$. The decoder $q_{\phi}(s\mid z)$ reconstructs the embedded states and subgoals back to original space $s'$ and ${s'}^{\mathcal{G}}$. During encoding and decoding, the encoder also minimizes the similarity between each pair of subgoals in the latent space, ensuring that subgoals remain distinguishable while preserving information from the original state. The loss functions for the encoder and decoder are defined as follows:
\begin{equation}
    \mathcal{L}(\theta,\phi)=\lambda_{\theta} \sum_{s_i\in \mathcal{D}}\left \|s_i-s_i'\right \|^2 +\lambda_{\phi} \sum_{i\neq j} \text{sim}(z^{\mathcal{G}}_{i},z^{\mathcal{G}}_{j}),
\end{equation}
where $\text{sim}(\cdot, \cdot)$ represents the similarity function, commonly using the cosine similarity measurement. 
Both $\lambda_{\theta}$ and $\lambda_{\phi}$ are positive coefficients.

\begin{figure}
    \centering
    \includegraphics[width=\columnwidth]{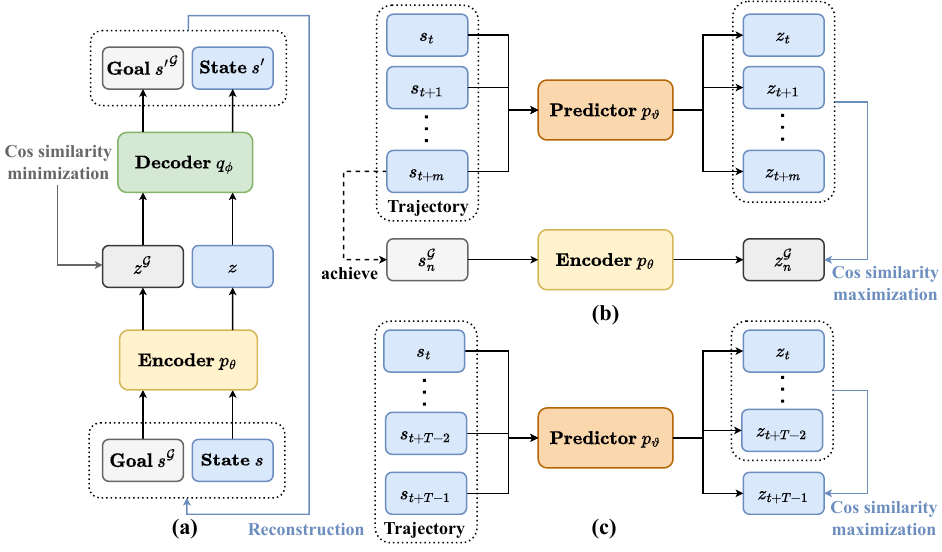}
    \caption{Illustration of subgoal prediction model. (a) Pretraining encoder and decoder. (b) Training predictor when $s_m$ achieves subgoal $s_n^{\mathcal{G}}$. (c) Training predictor when no subgoal is achieved.}
    \label{fig:high_level}
\end{figure}

Next, we train the predictor $\rho_\vartheta (z\mid s)$ for subgoal prediction. We sample sequential trajectory $\tau=\{s_t,s_{t+1},\cdots,s_{t+T-1}\}$ of length $T$ and check if there is any state in $\tau$ achieves subgoal. 
If a state $s_m$ achieves the subgoal $s^{\mathcal{G}}_n$, the expected prediction target for the states before $s_{t+m}$ is set to $p_\theta(s^{\mathcal{G}}_n)$. If not, the prediction result of the last state $\rho_{\vartheta}(s_{t+T-1})$ will be set as the expected prediction target for the entire trajectory. The loss function for the predictor is defined as follows:
\begin{equation}
    \mathcal{L}(\vartheta)=
\left\{\begin{aligned}
    &-\frac{1}{m+1}\sum_{i=t}^{t+m} \text{sim}(\rho_{\vartheta}(s_i),p_{\theta}(s^{\mathcal{G}}_n))& & 
    \begin{aligned}
    &\text{if} \ \exists \ s_{t+m} \\
    &\text{achieves} \ s^{\mathcal{G}}_n,
    \end{aligned}\\
    &-\frac{1}{T}\sum_{i=t}^{t+T-1} \text{sim}(\rho_{\vartheta}(s_i),\rho_\vartheta(s_{t+T-1}))& & \text{otherwise}. 
\end{aligned}\right. 
\end{equation}

\section{Experiments}
In this section, we conduct a series of experiments to investigate the following issues: 

\begin{enumerate}
\item Whether GDCC can accurately identify states with high causal capacity and are these states suitable for use as subgoals in the environment?
\item Whether the prediction model is capable of accurately predicting the corresponding subgoals for each state?
\item Whether GDCC can effectively improve performance compared to baseline methods?
\item Is the time consumption of GDCC acceptable?
\end{enumerate}

We selected the MuJoCo-Maze \cite{2012MuJoCo} and Habitat \cite{Savva_Kadian_Maksymets_Zhao_Wijmans_Jain_Straub_Liu_Koltun_Malik_etal,Szot_Clegg_Undersander_Wijmans_Zhao_Turner_Maestre_Mukadam_Chaplot_Maksymets_etal,Puig_Undersander_Szot_Cote_Yang_Partsey_Desai_Clegg_Hlavac_Min_etal} environments as our benchmarks to evaluate the performance of GDCC. To increase the persuasiveness and effectiveness of the experiments, we modified the environments to provide sparse reward multi-objective tasks. In these tasks, the agent must navigate from a random starting point to a random endpoint. This modification increases the difficulty of the environment. In the sparse reward setting, the agent receives a non-zero reward only upon achieving the final goal. Instead of simply memorizing a path to complete the task, the agent must fully understand the dynamic changes within the environment and make reasonable decisions. The visualization of the Habitat environment is shown in Fig. \ref{fig:habitat}.

\begin{figure}
    \centering
    \includegraphics[width=0.7\columnwidth]{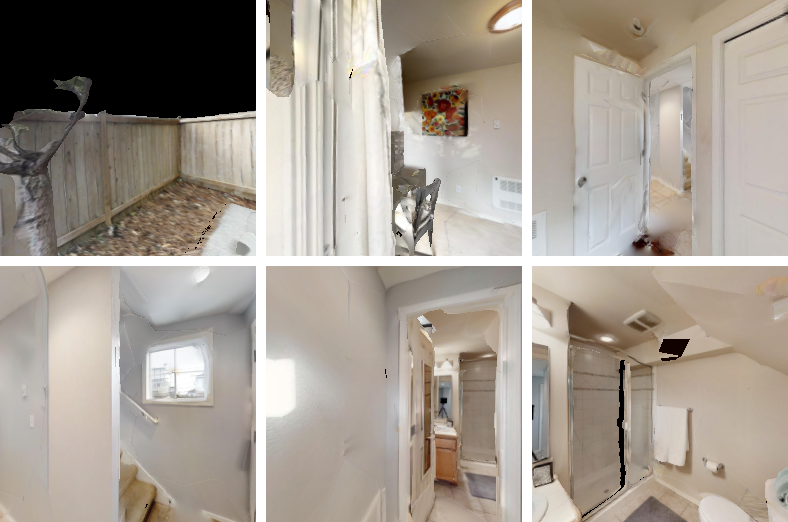}
    \caption{Visualization of the Habitat environment, which corresponds to the trajectory from the courtyard to the bathroom.}
    \label{fig:habitat}
\end{figure}

\begin{figure}[htb]
\centering
    \subfigure[]{
        \includegraphics[height=4.8cm]{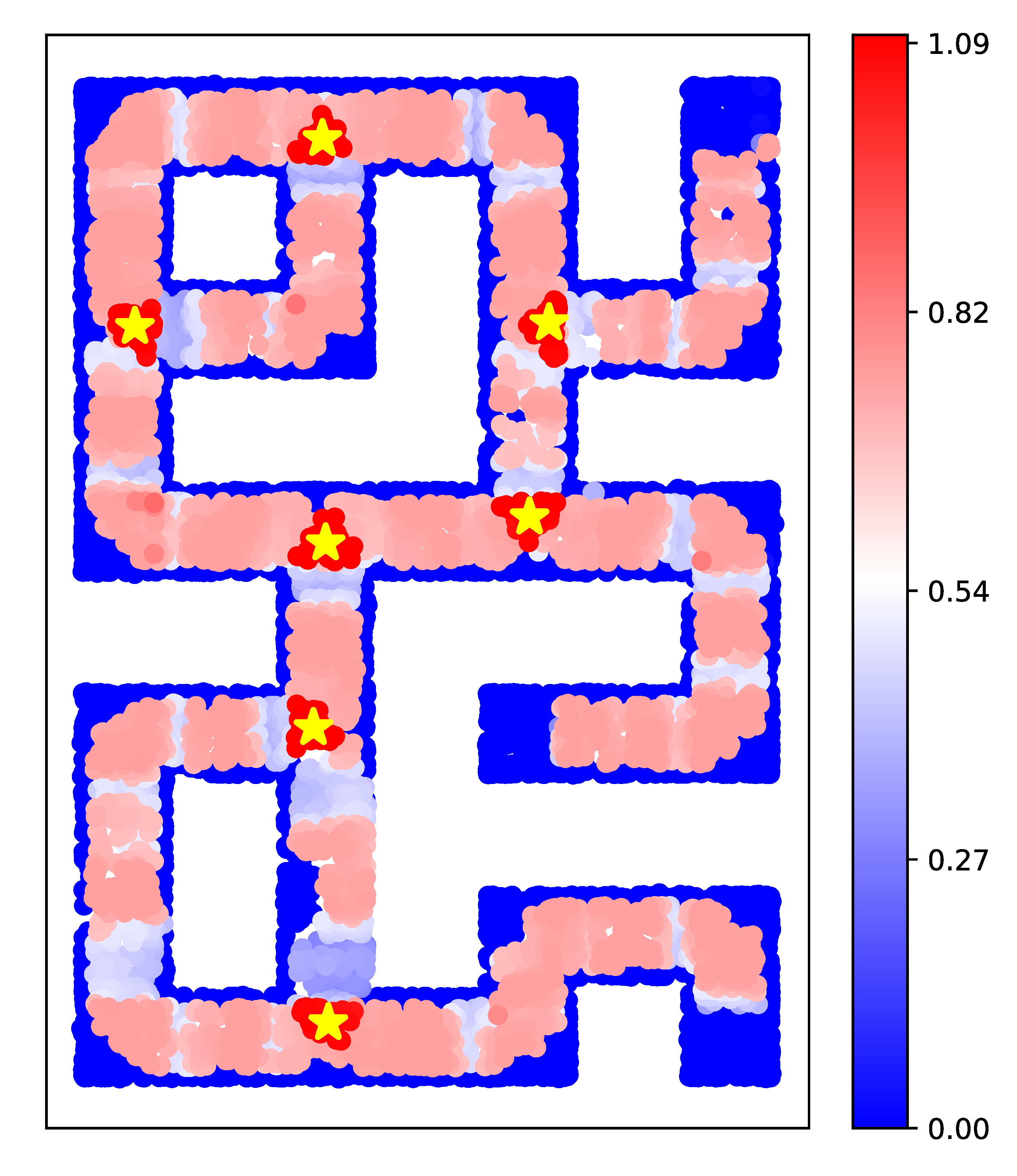}
        \label{fig:Maze causal capacity}
    }
    \subfigure[]{
        \includegraphics[height=4.8cm]{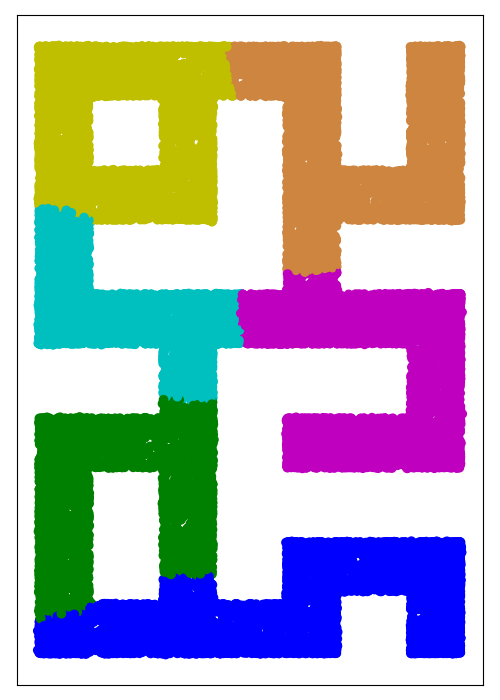}
        \label{fig:Maze prediction}
    }
    \subfigure[]{
        \includegraphics[height=4.8cm]{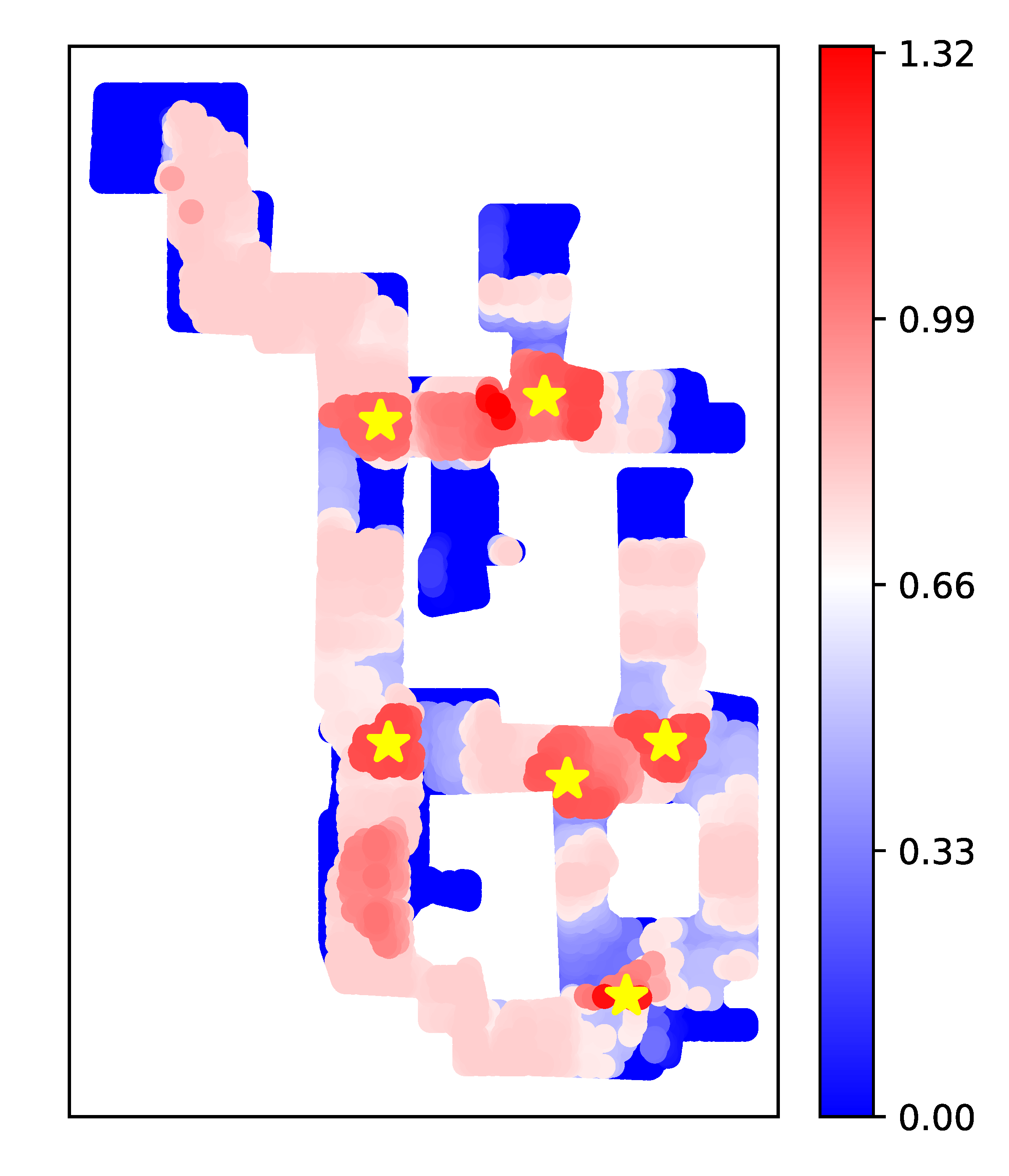}
        \label{fig:Annawan causal capacity}
    }
    \subfigure[]{
        \includegraphics[height=4.8cm]{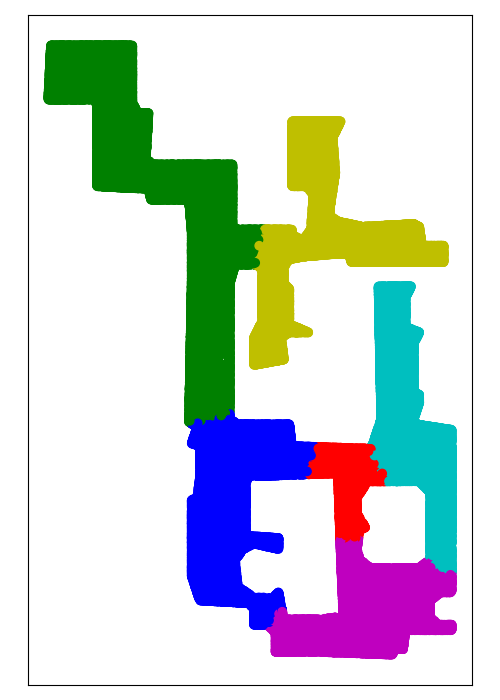}
        \label{fig:Annawan prediction}
    }
    \caption{Causal capacity calculation and subgoal prediction. (a) and (b): Results for Maze-large, (c) and (d): Results for Annawan. In the causal capacity calculation results, regions marked in red and blue indicate high and low causal capacity, while yellow stars represent the selected subgoals. In the subgoal prediction results, regions with the same color correspond to states predicted to the same subgoal by the GDCC.}
\label{result1}
\end{figure}

\subsection{Results of Causal Capacity Calculation}
We first evaluate the accuracy of the GDCC framework in estimating the causal capacity of each state in the environment. The results of causal capacity in the MuJoCo Maze-large and the Annawan and Applewold of Habitat are shown in Fig. \ref{result1}. High and low causal capacity states are represented by red and blue, respectively, with selected subgoals marked by yellow stars. The results demonstrate that GDCC accurately estimates the causal capacity of each state and selects subgoals that align with our expectations, effectively characterizing the intrinsic causality of the environment. More results can be found in the supplementary material.

\subsection{Results of Subgoal Prediction}
After calculating the causal capacity, we predict the corresponding subgoal for each state. This process can be seen as partitioning the state space into different regions. In Fig. \ref{result1}, we illustrate the partitioning results with different colors. Even in irregular maps like Habitat, the prediction model can clearly segment the boundaries of each region, ensuring the agent accurately acquires the optimal subgoal. More results are available in the supplementary material. 

Fig. \ref{fig:predictor} shows the curves of the reconstruction loss and the subgoal similarity loss during pretraining, as well as the curve of the predictor's accuracy. The encoder and decoder, trained with data sampled by the random policy, are able to quickly differentiate various subgoals while embedding states into latent space. The predictor can then accurately predict the optimal subgoal for the current state.

\begin{figure*}[htb]
    \begin{minipage}{0.28\textwidth}
    \centering
        \includegraphics[width=\columnwidth]{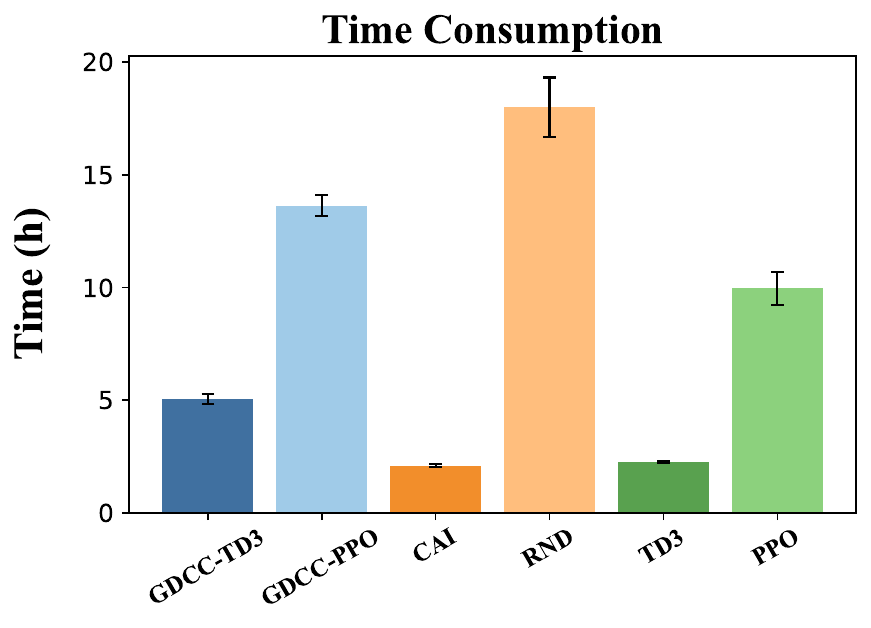}
        \caption{Time cost of each algorithm.}
        \label{fig:time}
    \end{minipage}
    \begin{minipage}{0.685\textwidth}
    \centering
        \includegraphics[width=\textwidth]{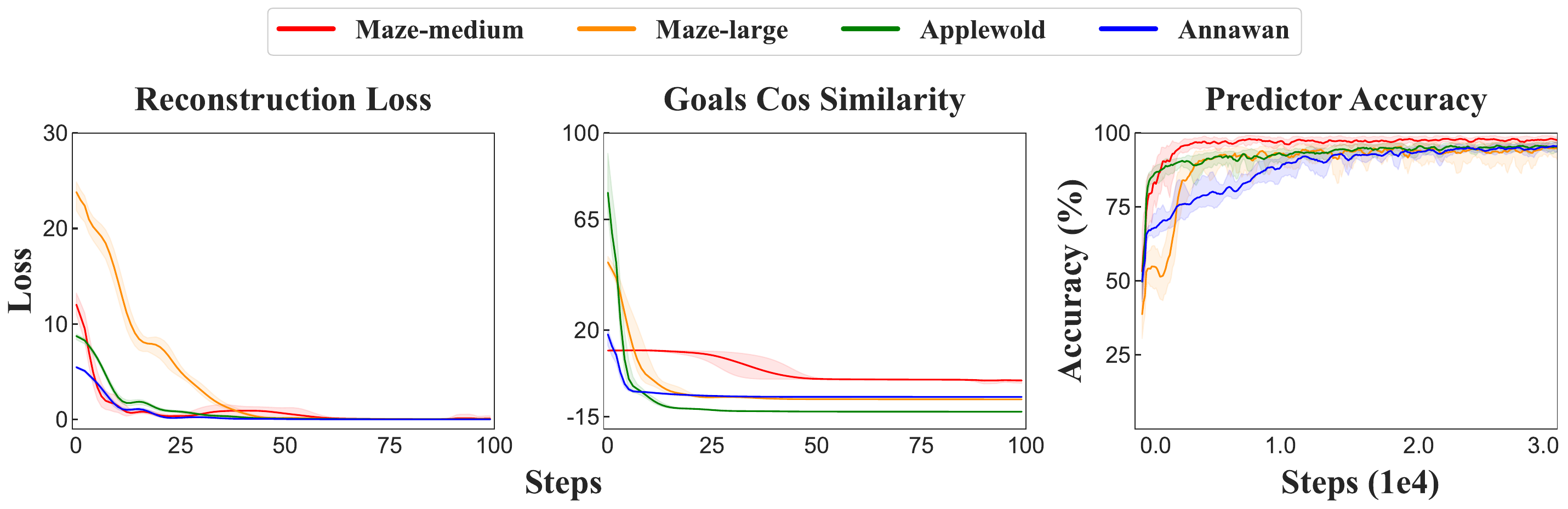}
        \caption{Illustration of the reconstruction loss (left), the cosine similarity of subgoals (middle), and the subgoal prediction accuracy (right).}
        \label{fig:predictor}
    \end{minipage}
    \end{figure*}

\begin{table}[ht]
    \centering
    \scriptsize
    \caption{Time consumption of each module in GDCC and each algorithm.}
    \label{table:time}
    \centering
    \begin{tabular}{lc|lc}
        \toprule
        \textbf{Module of GDCC}             & \textbf{Time(h)}             & \textbf{Algorithm}    & \textbf{Time(h)}\\ 
        \midrule
        Sampling Data               & $0.08\ \pm 0.01$        & GDCC-TD3              & $5.06\pm0.23$\\
        Calculating Causal Capacity & $0.02 \pm 0.04$          & GDCC-PPO              & $13.64\pm0.46$\\
        Training Subgoal Predictor  & $0.02 \pm 0.01$         & CAI                   & $2.09\pm0.03$\\
        Training TD3 Policy         & $4.94 \pm 0.24$     & RND                   & $18.01\pm 1.31$ \\
        Training PPO Policy         & $13.52 \pm 0.46$    & TD3                   & $2.26\pm 0.03$ \\
                                    &                           & PPO                   & $9.96 \pm 0.75$\\
        \bottomrule
        \end{tabular}
\end{table}

\begin{figure*}[htb]
    \centering 
    \includegraphics[width=0.85\textwidth]{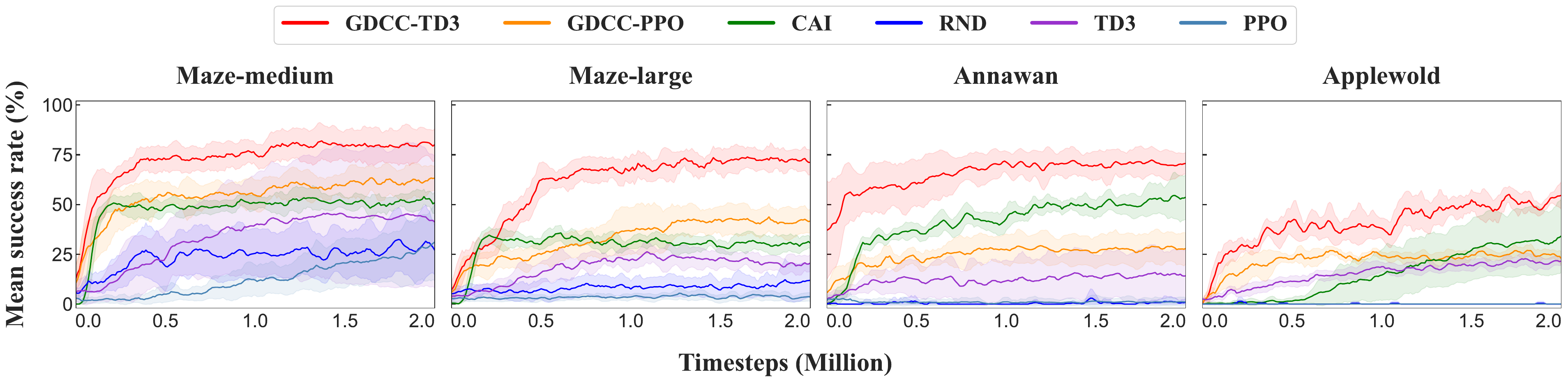}
    \caption{Comparison of our method against baselines in different scenarios: Maze-medium and Maze-large of MuJoCo, and Annawan and Applewold of Habitat.}
    \label{performance}
\end{figure*}
\subsection{Performance of GDCC}
To study the effectiveness of GDCC, we integrate it with two well-known reinforcement learning algorithms: Proximal Policy Optimization (PPO) \cite{Schulman_Wolski_Dhariwal_Radford_Klimov_2017} and Twin Delayed Deep Deterministic Policy Gradient (TD3) \cite{2018Addressing}, which are two major categories of RL. We then compare the performance of  the GDCC framework with several baseline algorithms, including Causal Action Influence (CAI) \cite{seitzer2021causal} and Random Network Distillation (RND) \cite{Burda_Edwards_Storkey_Klimov_2018} to demonstrate its effectiveness in both causal goal-conditioned RL and exploration RL.
For tasks with sparse rewards, we designed a potential-based reward function that only activates when subgoals are correctly predicted by GDCC. This design can have negative effects on other algorithms if no subgoal is utilized. More details regarding the design of the potential-based reward are presented in supplementary material. 

The empirical results presented in Fig. \ref{performance} clearly illustrate the superiority of GDCC. In various scenarios such as Maze-medium and Maze-large (MuJoCo environments) and Annawan and Applewold (Habitat environments), GDCC significantly outperforms the baselines. Specifically, the combination of GDCC and TD3 achieves at least a 25\% higher success rate on average than other algorithms. Although GDCC combined with PPO does not reach the highest performance, it still shows substantial improvements over PPO and RND alone. In the Habitat environment, where PPO and RND struggle to complete the tasks, the incorporation of GDCC leads to a notable increase in success rates.

\subsection{Ablation Study}
To investigate the contributions of each module in GDCC, we conducted an ablation study on the Maze-large environment. Fig. \ref{fig:ablation} demonstrates the improvements of subgoal predictor and potential-based reward to GDCC. The performance of GDCC is severely affected when the subgoal predictor is removed, the agent struggles to accomplish the task. This highlights the importance of correctly predicting the current subgoal for the hierarchical framework. The introduction of the potential-based reward helps GDCC explore more purposefully, enabling the agent to better understand both the environment and the task.

\begin{figure}[htb]
    \centering
    \includegraphics[width=0.99\columnwidth]{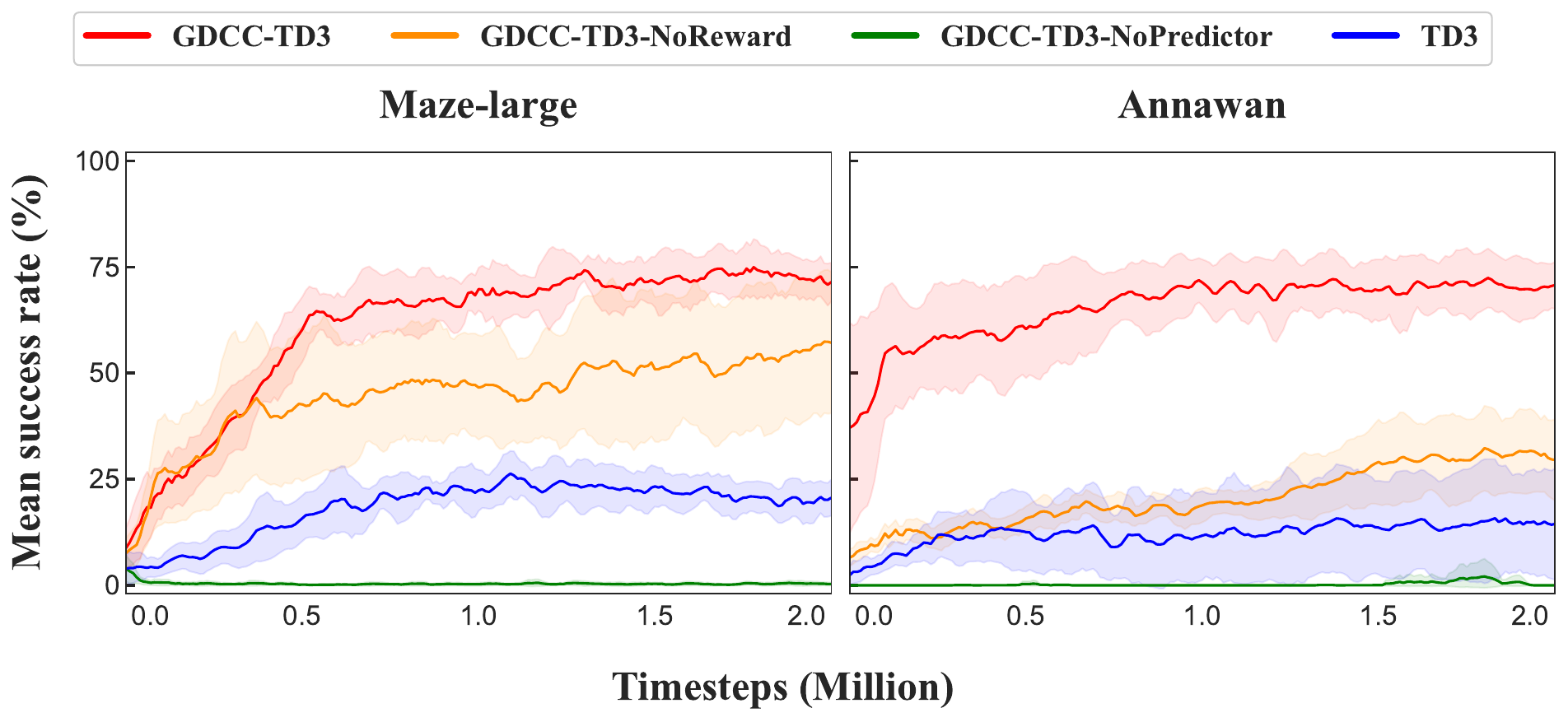}
    \caption{Ablation study results of GDCC. In the \textbf{NoReward} setting, the agent only receives the sparse reward from the environment. In the \textbf{NoPredictor} setting, the current subgoal is set to the closest subgoal in the state space.}
    \label{fig:ablation}
\end{figure}
\subsection{Computational Overhead}
 Fig. \ref{fig:time} and Tab. \ref{table:time} show the time consumption of each module in GDCC, including sampling data, calculating causal capacity and training the subgoal predictor. The time consumption of the pretraining phase is less than $3\%$ of the overall GDCC framework. We also present the time consumption for all of the baselines conducted on GeForce RTX 2080Ti.

\section{Conclusion}
In this paper, we propose the GDCC framework, which enables agents to actively explore the environment by understanding the causal influences of their actions on state transitions. 
By deriving causal capacity from Granger causality, we identify states where an agent's actions have the most significant impact on its future trajectory. 
Those critical states are selected as subgoals to guide the exploration of the agent.
Our empirical results demonstrate the efficacy of the GDCC framework in the MuJoCo and Habitat benchmarks, where GDCC consistently outperforms baselines.

\bibliographystyle{plainnat}
\bibliography{references}

\begin{thebibliography}{43}
\providecommand{\natexlab}[1]{#1}
\providecommand{\url}[1]{\texttt{#1}}
\expandafter\ifx\csname urlstyle\endcsname\relax
  \providecommand{\doi}[1]{doi: #1}\else
  \providecommand{\doi}{doi: \begingroup \urlstyle{rm}\Url}\fi

\bibitem[Andrychowicz et~al.(2017)Andrychowicz, Wolski, Ray, Schneider, Fong, Welinder, McGrew, Tobin, Abbeel, and Zaremba]{Andrychowicz_Wolski_Ray_Schneider_Fong_Welinder_McGrew_Tobin_Abbeel_Zaremba_2017}
Marcin Andrychowicz, Filip Wolski, Alex Ray, Jonas Schneider, Rachel Fong, Peter Welinder, Bob McGrew, Josh Tobin, Pieter Abbeel, and Wojciech Zaremba.
\newblock Hindsight experience replay.
\newblock \emph{arXiv preprint arXiv:1707.01495}, Jul 2017.

\bibitem[Bagaria and Konidaris(2020)]{Bagaria_Konidaris_2020}
Akhil Bagaria and George Konidaris.
\newblock Option discovery using deep skill chaining.
\newblock \emph{International Conference on Learning Representations}, Apr 2020.

\bibitem[Bai et~al.(2019)Bai, Liu, Zhao, and Tang]{bai2019guided}
Chenjia Bai, Peng Liu, Wei Zhao, and Xianglong Tang.
\newblock Guided goal generation for hindsight multi-goal reinforcement learning.
\newblock \emph{Neurocomputing}, 359:\penalty0 353--367, 2019.

\bibitem[Barnett et~al.(2009)Barnett, Barrett, and Seth]{Barnett_Barrett_Seth_2009}
Lionel Barnett, Adam~B. Barrett, and Anil~K. Seth.
\newblock Granger causality and transfer entropy are equivalent for gaussian variables.
\newblock \emph{Physical Review Letters}, Dec 2009.
\newblock \doi{10.1103/physrevlett.103.238701}.
\newblock URL \url{http://dx.doi.org/10.1103/physrevlett.103.238701}.

\bibitem[Berner et~al.(2019)Berner, Brockman, Chan, Cheung, D{\k{e}}biak, Dennison, Farhi, Fischer, Hashme, Hesse, et~al.]{berner2019dota}
Christopher Berner, Greg Brockman, Brooke Chan, Vicki Cheung, Przemys{\l}aw D{\k{e}}biak, Christy Dennison, David Farhi, Quirin Fischer, Shariq Hashme, Chris Hesse, et~al.
\newblock Dota 2 with large scale deep reinforcement learning.
\newblock \emph{arXiv preprint arXiv:1912.06680}, 2019.

\bibitem[Buesing et~al.(2018)Buesing, Weber, Zwols, Racanière, Guez, Lespiau, and Heess]{Buesing_Weber_Zwols_Racanière_Guez_Lespiau_Heess_2018}
Lars Buesing, Theophane Weber, Yori Zwols, Sébastien Racanière, Arthur Guez, Jean-Baptiste Lespiau, and Nicolas Heess.
\newblock Woulda, coulda, shoulda: Counterfactually-guided policy search.
\newblock \emph{arXiv preprint arXiv:1811.06272}, Nov 2018.

\bibitem[Buitinck et~al.(2013)Buitinck, Louppe, Blondel, Pedregosa, Mueller, Grisel, Niculae, Prettenhofer, Gramfort, Grobler, Layton, VanderPlas, Joly, Holt, and Varoquaux]{sklearn_api}
Lars Buitinck, Gilles Louppe, Mathieu Blondel, Fabian Pedregosa, Andreas Mueller, Olivier Grisel, Vlad Niculae, Peter Prettenhofer, Alexandre Gramfort, Jaques Grobler, Robert Layton, Jake VanderPlas, Arnaud Joly, Brian Holt, and Ga{\"{e}}l Varoquaux.
\newblock {API} design for machine learning software: Experiences from the scikit-learn project.
\newblock In \emph{ECML PKDD Workshop: Languages for Data Mining and Machine Learning}, pages 108--122, 2013.

\bibitem[Burda et~al.(2018)Burda, Edwards, Storkey, and Klimov]{Burda_Edwards_Storkey_Klimov_2018}
Yuri Burda, Harrison Edwards, Amos Storkey, and Oleg Klimov.
\newblock Exploration by random network distillation.
\newblock \emph{arXiv preprint arXiv:1810.12894}, Oct 2018.

\bibitem[Chane-Sane et~al.(2021)Chane-Sane, Schmid, and Laptev]{Chane-Sane_Schmid_Laptev_2021}
Elliot Chane-Sane, Cordelia Schmid, and Ivan Laptev.
\newblock Goal-conditioned reinforcement learning with imagined subgoals.
\newblock \emph{International Conference on Machine learning}, Jul 2021.

\bibitem[Chvykov et~al.(2021)Chvykov, Berrueta, Vardhan, Savoie, Samland, Murphey, Wiesenfeld, Goldman, and England]{Chvykov_Berrueta_Vardhan_Savoie_Samland_Murphey_Wiesenfeld_Goldman_England_2021}
Pavel Chvykov, Thomas~A. Berrueta, Akash Vardhan, William Savoie, Alexander Samland, Todd~D. Murphey, Kurt Wiesenfeld, Daniel~I. Goldman, and Jeremy~L. England.
\newblock Low rattling: A predictive principle for self-organization in active collectives.
\newblock \emph{Science}, 371\penalty0 (6524):\penalty0 90–95, Jan 2021.
\newblock \doi{10.1126/science.abc6182}.
\newblock URL \url{http://dx.doi.org/10.1126/science.abc6182}.

\bibitem[Dayan and Hinton(1992)]{NIPS1992_d14220ee}
Peter Dayan and Geoffrey~E Hinton.
\newblock Feudal reinforcement learning.
\newblock In S.~Hanson, J.~Cowan, and C.~Giles, editors, \emph{Advances in Neural Information Processing Systems}, volume~5. Morgan-Kaufmann, 1992.
\newblock URL \url{https://proceedings.neurips.cc/paper_files/paper/1992/file/d14220ee66aeec73c49038385428ec4c-Paper.pdf}.

\bibitem[Deng et~al.(2023)Deng, Jiang, Long, and Zhang]{Deng_Jiang_Long_Zhang_2023}
Zhihong Deng, Jing Jiang, Guodong Long, and Chengqi Zhang.
\newblock Causal reinforcement learning: A survey.
\newblock \emph{arXiv preprint arXiv:2307.01452}, Jul 2023.

\bibitem[D{\'\i}az and van~der Laan(2013)]{diaz2013sensitivity}
Iv{\'a}n D{\'\i}az and Mark~J van~der Laan.
\newblock Sensitivity analysis for causal inference under unmeasured confounding and measurement error problems.
\newblock \emph{The International Journal of Biostatistics}, 9\penalty0 (2):\penalty0 149--160, 2013.

\bibitem[Ding et~al.(2022)Ding, Lin, Li, and Zhao]{NEURIPS2022_a96368eb}
Wenhao Ding, Haohong Lin, Bo~Li, and Ding Zhao.
\newblock Generalizing goal-conditioned reinforcement learning with variational causal reasoning.
\newblock In S.~Koyejo, S.~Mohamed, A.~Agarwal, D.~Belgrave, K.~Cho, and A.~Oh, editors, \emph{Advances in Neural Information Processing Systems}, volume~35, pages 26532--26548, 2022.

\bibitem[Dixit et~al.(2018)Dixit, Wagoner, Weistuch, Pressé, Ghosh, and Dill]{Dixit_Wagoner_Weistuch_Pressé_Ghosh_Dill_2018}
Purushottam~D. Dixit, Jason Wagoner, Corey Weistuch, Steve Pressé, Kingshuk Ghosh, and Ken~A. Dill.
\newblock Maximum caliber: a general variational principle for dynamical systems.
\newblock \emph{The Journal of Chemical Physics}, 148\penalty0 (1), Jan 2018.
\newblock \doi{10.1063/1.5012990}.
\newblock URL \url{http://dx.doi.org/10.1063/1.5012990}.

\bibitem[Ecoffet et~al.(2019)Ecoffet, Huizinga, Lehman, Stanley, and Clune]{Ecoffet_Huizinga_Lehman_Stanley_Clune_2019}
Adrien Ecoffet, Joost Huizinga, Joel Lehman, KennethO. Stanley, and Jeff Clune.
\newblock Go-explore: a new approach for hard-exploration problems.
\newblock \emph{arXiv preprint arXiv:1901.10995}, Jan 2019.

\bibitem[Eysenbach et~al.(2018)Eysenbach, Gupta, Ibarz, and Levine]{Eysenbach_Gupta_Ibarz_Levine_2018}
Benjamin Eysenbach, Abhishek Gupta, Julian Ibarz, and Sergey Levine.
\newblock Diversity is all you need: Learning skills without a reward function.
\newblock \emph{arXiv preprint arXiv:1802.06070}, Feb 2018.

\bibitem[Fang et~al.(2019)Fang, Zhou, Du, Han, and Zhang]{fang2019curriculum}
Meng Fang, Tianyi Zhou, Yali Du, Lei Han, and Zhengyou Zhang.
\newblock Curriculum-guided hindsight experience replay.
\newblock \emph{Advances in Neural Information Processing Systems}, 32, 2019.

\bibitem[Fujimoto et~al.(2018)Fujimoto, van Hoof, and Meger]{2018Addressing}
Scott Fujimoto, Herke van Hoof, and David Meger.
\newblock Addressing function approximation error in actor-critic methods.
\newblock In Jennifer Dy and Andreas Krause, editors, \emph{International Conference on Machine learning}, volume~80 of \emph{Proceedings of Machine Learning Research}, pages 1587--1596, 10--15 Jul 2018.
\newblock URL \url{https://proceedings.mlr.press/v80/fujimoto18a.html}.

\bibitem[Gehring et~al.(2021)Gehring, Synnaeve, Krause, and Usunier]{Gehring_Synnaeve_Krause_Usunier_2021}
Jonas Gehring, Gabriel Synnaeve, Andreas Krause, and Nicolas Usunier.
\newblock Hierarchical skills for efficient exploration.
\newblock \emph{Advances in Neural Information Processing Systems}, Oct 2021.

\bibitem[Granger(1969)]{Granger}
C.~W.~J. Granger.
\newblock Investigating causal relations by econometric models and cross-spectral methods.
\newblock \emph{Econometrica}, 37\penalty0 (3):\penalty0 424--438, 1969.

\bibitem[Hlavackovaschindler et~al.(2007)Hlavackovaschindler, Palus, Vejmelka, and Bhattacharya]{HLAVACKOVASCHINDLER_PALUS_VEJMELKA_BHATTACHARYA_2007}
K~Hlavackovaschindler, M~Palus, M~Vejmelka, and J~Bhattacharya.
\newblock Causality detection based on information-theoretic approaches in time series analysis.
\newblock \emph{Physics Reports}, page 1–46, Mar 2007.
\newblock \doi{10.1016/j.physrep.2006.12.004}.
\newblock URL \url{http://dx.doi.org/10.1016/j.physrep.2006.12.004}.

\bibitem[Jaynes(1957)]{Jaynes1957InformationTA}
Edwin~T. Jaynes.
\newblock Information theory and statistical mechanics.
\newblock \emph{Physical Review}, 106:\penalty0 620--630, 1957.
\newblock URL \url{https://api.semanticscholar.org/CorpusID:17870175}.

\bibitem[Kaelbling(1993)]{Kaelbling1993LearningTA}
Leslie~Pack Kaelbling.
\newblock Learning to achieve goals.
\newblock In \emph{International Joint Conference on Artificial Intelligence}, 1993.
\newblock URL \url{https://api.semanticscholar.org/CorpusID:5538688}.

\bibitem[Kalashnikov et~al.(2018)Kalashnikov, Irpan, Pastor, Ibarz, Herzog, Jang, Quillen, Holly, Kalakrishnan, Vanhoucke, et~al.]{kalashnikov2018scalable}
Dmitry Kalashnikov, Alex Irpan, Peter Pastor, Julian Ibarz, Alexander Herzog, Eric Jang, Deirdre Quillen, Ethan Holly, Mrinal Kalakrishnan, Vincent Vanhoucke, et~al.
\newblock Scalable deep reinforcement learning for vision-based robotic manipulation.
\newblock In \emph{Conference on Robot Learning}, pages 651--673, 2018.

\bibitem[Lillicrap et~al.(2015)Lillicrap, Hunt, Pritzel, Heess, Erez, Tassa, Silver, and Wierstra]{2015Continuous}
Timothy~P. Lillicrap, Jonathan~J. Hunt, Alexander Pritzel, Nicolas Heess, Tom Erez, Yuval Tassa, David Silver, and Daan Wierstra.
\newblock Continuous control with deep reinforcement learning.
\newblock \emph{Computer Science}, 2015.

\bibitem[Mannor et~al.(2004)Mannor, Menache, Hoze, and Klein]{Mannor_Menache_Hoze_Klein_2004}
Shie Mannor, Ishai Menache, Amit Hoze, and Uri Klein.
\newblock Dynamic abstraction in reinforcement learning via clustering.
\newblock In \emph{International Conference on Machine learning}, page~71, Jan 2004.

\bibitem[McGovern and Barto(2001)]{McGovern_Barto_2001}
Amy McGovern and AndrewG. Barto.
\newblock Automatic discovery of subgoals in reinforcement learning using diverse density.
\newblock \emph{International Conference on Machine learning}, Jun 2001.

\bibitem[Peng et~al.(2022)Peng, Hu, Zhang, Tang, Guo, Yi, Chen, Zhang, Du, Li, Guo, and Chen]{NEURIPS2022_7e9fbd01}
Shaohui Peng, Xing Hu, Rui Zhang, Ke~Tang, Jiaming Guo, Qi~Yi, Ruizhi Chen, Xishan Zhang, Zidong Du, Ling Li, Qi~Guo, and Yunji Chen.
\newblock Causality-driven hierarchical structure discovery for reinforcement learning.
\newblock In S.~Koyejo, S.~Mohamed, A.~Agarwal, D.~Belgrave, K.~Cho, and A.~Oh, editors, \emph{Advances in Neural Information Processing Systems}, volume~35, pages 20064--20076, 2022.

\bibitem[Puig et~al.(2023)Puig, Undersander, Szot, Cote, Yang, Partsey, Desai, Clegg, Hlavac, Min, Vondruš, Gervet, Berges, Turner, Maksymets, Kira, Kalakrishnan, Malik, Chaplot, Jain, Batra, Rai, and Mottaghi]{Puig_Undersander_Szot_Cote_Yang_Partsey_Desai_Clegg_Hlavac_Min_etal}
Xavi Puig, Eric Undersander, Andrew Szot, MikaelDallaire Cote, Tsung-Yen Yang, Ruslan Partsey, Ruta Desai, AlexanderWilliam Clegg, Michal Hlavac, SoYeon Min, Vladimír Vondruš, Theophile Gervet, Vincent-Pierre Berges, JohnM Turner, Oleksandr Maksymets, Zsolt Kira, Mrinal Kalakrishnan, Jitendra Malik, Singh Chaplot, Unnat Jain, Dhruv Batra, Akshara Rai, and Roozbeh Mottaghi.
\newblock Habitat 3.0: A co-habitat for humans, avatars and robots.
\newblock \emph{arXiv preprint arXiv:2310.13724}, 2023.

\bibitem[Savva et~al.(2019)Savva, Kadian, Maksymets, Zhao, Wijmans, Jain, Straub, Liu, Koltun, Malik, Parikh, and Batra]{Savva_Kadian_Maksymets_Zhao_Wijmans_Jain_Straub_Liu_Koltun_Malik_etal}
Manolis Savva, Abhishek Kadian, Oleksandr Maksymets, Yili Zhao, Erik Wijmans, Bhavana Jain, Julian Straub, Jia Liu, Vladlen Koltun, Jitendra Malik, Devi Parikh, and Dhruv Batra.
\newblock Habitat: A platform for embodied ai research.
\newblock In \emph{International Conference on Computer Vision}, Oct 2019.
\newblock \doi{10.1109/iccv.2019.00943}.
\newblock URL \url{http://dx.doi.org/10.1109/iccv.2019.00943}.

\bibitem[Schreiber(2000)]{Schreiber_2002}
Thomas Schreiber.
\newblock Measuring information transfer.
\newblock \emph{Physical Review Letters}, page 461–464, Jul 2000.
\newblock \doi{10.1103/physrevlett.85.461}.
\newblock URL \url{http://dx.doi.org/10.1103/physrevlett.85.461}.

\bibitem[Schulman et~al.(2017)Schulman, Wolski, Dhariwal, Radford, and Klimov]{Schulman_Wolski_Dhariwal_Radford_Klimov_2017}
John Schulman, Filip Wolski, Prafulla Dhariwal, Alec Radford, and Oleg Klimov.
\newblock Proximal policy optimization algorithms.
\newblock \emph{arXiv preprint arXiv:1707.06347}, Jul 2017.

\bibitem[Seitzer et~al.(2021)Seitzer, Sch{\"o}lkopf, and Martius]{seitzer2021causal}
Maximilian Seitzer, Bernhard Sch{\"o}lkopf, and Georg Martius.
\newblock Causal influence detection for improving efficiency in reinforcement learning.
\newblock \emph{Advances in Neural Information Processing Systems}, 34:\penalty0 22905--22918, 2021.

\bibitem[Sontakke et~al.(2021)Sontakke, Mehrjou, Itti, and Sch{\"o}lkopf]{sontakke2021causal}
Sumedh~A Sontakke, Arash Mehrjou, Laurent Itti, and Bernhard Sch{\"o}lkopf.
\newblock Causal curiosity: Rl agents discovering self-supervised experiments for causal representation learning.
\newblock In \emph{International Conference on Machine learning}, pages 9848--9858, 2021.

\bibitem[Stone(2020)]{stone2020causal}
Lawrence Stone.
\newblock Causal inference in statistics: A primer.
\newblock \emph{Perspectives on Information Fusion}, 3\penalty0 (1):\penalty0 27--35, 2020.

\bibitem[Swamy et~al.(2022)Swamy, Choudhury, Bagnell, and Wu]{Swamy_Choudhury_Bagnell_Wu}
Gokul Swamy, Sanjiban Choudhury, J.Andrew Bagnell, and ZhiweiSteven Wu.
\newblock Causal imitation learning under temporally correlated noise.
\newblock \emph{arXiv preprint arXiv:2202.01312}, 2022.

\bibitem[Szot et~al.(2021)Szot, Clegg, Undersander, Wijmans, Zhao, Turner, Maestre, Mukadam, Chaplot, Maksymets, Gokaslan, Vladimir, Dharur, Meier, Galuba, Chang, Kira, Koltun, Malik, Savva, and Batra]{Szot_Clegg_Undersander_Wijmans_Zhao_Turner_Maestre_Mukadam_Chaplot_Maksymets_etal}
Andrew Szot, Alexander Clegg, Eric Undersander, Erik Wijmans, Yili Zhao, JohnR.G. Turner, Noah Maestre, Mustafa Mukadam, DevendraSingh Chaplot, Oleksandr Maksymets, Aaron Gokaslan, Vondrus Vladimir, Sameer Dharur, Franziska Meier, Wojciech Galuba, AngelX. Chang, Zsolt Kira, Vladlen Koltun, Jitendra Malik, Manolis Savva, and Dhruv Batra.
\newblock Habitat 2.0: Training home assistants to rearrange their habitat.
\newblock \emph{Advances in Neural Information Processing Systems}, Dec 2021.

\bibitem[Todorov et~al.(2012)Todorov, Erez, and Tassa]{2012MuJoCo}
Emanuel Todorov, Tom Erez, and Yuval Tassa.
\newblock Mujoco: A physics engine for model-based control.
\newblock In \emph{Intelligent Robots and Systems}, 2012.

\bibitem[Xia et~al.(2018)Xia, R.~Zamir, He, Sax, Malik, and Savarese]{xiazamirhe2018gibsonenv}
Fei Xia, Amir R.~Zamir, Zhi-Yang He, Alexander Sax, Jitendra Malik, and Silvio Savarese.
\newblock Gibson {Env}: Real-world perception for embodied agents.
\newblock In \emph{Conference on Computer Vision and Pattern Recognition}, 2018.

\bibitem[Xiao et~al.(2024)Xiao, Li, Wang, Peng, Wu, Zhao, and Zhang]{Xiao_Li_Wang_Peng_Wu_Zhao_Zhang_2023}
Baidi Xiao, Rongpeng Li, Fei Wang, Chenghui Peng, Jianjun Wu, Zhifeng Zhao, and Honggang Zhang.
\newblock Stochastic graph neural network-based value decomposition for marl in internet of vehicles.
\newblock \emph{IEEE Transactions on Vehicular Technology}, 73\penalty0 (2):\penalty0 1582--1596, 2024.
\newblock \doi{10.1109/TVT.2023.3312574}.

\bibitem[Zhang et~al.(2020)Zhang, Lyle, Sodhani, Filos, Kwiatkowska, Pineau, Gal, and Precup]{Zhang_Lyle_Sodhani_Filos_Kwiatkowska_Pineau_Gal_Precup_2020}
Amy Zhang, Clare Lyle, Shagun Sodhani, Angelos Filos, Marta Kwiatkowska, Joelle Pineau, Yarin Gal, and Doina Precup.
\newblock Invariant causal prediction for block mdps.
\newblock \emph{arXiv preprint arXiv:2003.06016}, Mar 2020.

\bibitem[Şimşek et~al.(2005)Şimşek, Wolfe, and Barto]{simsek_Wolfe_Barto_2005}
Özgür Şimşek, Alicia~P. Wolfe, and Andrew~G. Barto.
\newblock Identifying useful subgoals in reinforcement learning by local graph partitioning.
\newblock In \emph{International Conference on Machine learning}, Jan 2005.
\newblock \doi{10.1145/1102351.1102454}.

\end{thebibliography}
\clearpage

\appendix
\section{Proofs of Proposition}\label{app:proof}

\begin{proof}[Proof of Prop.~\ref{prop:bound}]
For any variable $\mathbf{X}$, it holds that $p(\mathbf{X}=x) \in [0,1]$, and thus:
\begin{equation}
         \mathcal{H}(\mathbf{X})=-\sum_{x\in|\mathbf{X}|}p(x)\log p(x) \geq 0.
\end{equation}
\begin{equation}
     \mathcal{T}(\mathbf{X}\rightarrow \mathbf{Y}) = \mathcal{H}(\mathbf{Y}'|\mathbf{Y})-\mathcal{H}(\mathbf{Y}'|\mathbf{Y},\mathbf{X})\leq \mathcal{H}(\mathbf{Y}'|\mathbf{Y}) .
\end{equation}
This property applies to state and action variables. It proves that the upper bound of $\mathcal{T}(A\rightarrow S\mid {S=s,\text{do}(A=a)})$ (hereafter referred to as $\mathcal{T}_{A\rightarrow S}$) is:
\begin{equation}
    \begin{aligned}
     \mathcal{T}_{A\rightarrow S} &=\mathcal{H}(S'\mid S=s)-\mathcal{H}(S'\mid S=s, \text{do}(A=a))\\
	 &\leq \mathcal{H}(S'\mid S=s).
    \end{aligned}
\end{equation}

When the state transition is deterministic under a certain action, i.e.,  $\exists s_j,a_i,~p(s_j\mid s,a_i)=1$, we have: 

\begin{equation}
    \begin{aligned}
    \mathcal{H}(S'|S=s,\text{do}&(A=a_i))\\
    &= -p(s_j\mid s, a_i)\log p(s_j\mid s, a_i) = 0 .
    \end{aligned}
\end{equation}
So when the environment exhibits a deterministic state transition at $s$, the equality holds.

For the lower bound of the transfer entropy, we first factorize the non-interventional state transition probability $p(s \mid s)$ as follows:
\begin{equation}
p(s'\mid s) = \sum_{a\in\mathcal{A}}p(a\mid s)p(s'\mid s, a).
\end{equation}

We set $f(x)=-x\log x,~x\in[0,1]$. Obviously, $f(x)$ is a concave function and $\sum_{a \in \mathcal{A}} P(a \mid s) = 1$. According to Jensen's inequality, we have:
\begin{equation}
f(p(s\mid s))\geq \sum_{a\in\mathcal{A}}p(a\mid s)f(p(s'\mid s, a)).
\end{equation}
We further derive the non-interventional state transition entropy as follows:
\begin{equation}
    \begin{aligned}
	\mathcal{H}(S'\mid S&=s)  =-\sum_{s\in \mathcal{S}}p(s'\mid s)\log p(s'\mid s) \\
	&=\sum_{s\in\mathcal{S}}f(p(s'\mid s))\\
	&\geq \sum_{s\in\mathcal{S}}\sum_{a\in\mathcal{A}}p(a\mid s)f(p(s'\mid s, a))\\
	&=\sum_{a\in\mathcal{A}}p(a\mid s)\sum_{s\in\mathcal{S}}\left(-p(s'\mid s, a)\log p(s'\mid s, a)\right)\\
	&=\sum_{a\in\mathcal{A}}p(a\mid s)\mathcal{H}(S'\mid S=s, \text{do}(A=a)).\\
    \end{aligned}
    \label{eq:non}
\end{equation}
Since $p(a \mid s)\mathcal{H}(S' \mid S=s, \text{do}(A=a))\geq0, ~ \forall a\in\mathcal{A}$, then we have:
\begin{gather}
    \begin{aligned}
	\mathcal{H}(S'\mid S&=s)\\
        &\geq p(a\mid s)\mathcal{H}(S'\mid S=s, \text{do}(A=a)), ~\forall  a\in\mathcal{A},
    \end{aligned}\\
    \begin{aligned}
	\Rightarrow \mathcal{H}(S'\mid S=s, \text{do}&(A=a))\\
            &\leq \frac{1}{p(a\mid s)}\mathcal{H}(S'\mid S=s)~\forall  a\in\mathcal{A} .\label{eq:greater}
    \end{aligned}
\end{gather}

By substituting Eq. \ref{eq:greater} into the definition of transfer entropy, we can derive the lower bound of transfer entropy as follows:
\begin{equation}
    \begin{aligned}
	\mathcal{T}_{A\rightarrow S}&=\mathcal{H}(S'\mid S=s)-\mathcal{H}(S'\mid S=s, \text{do}(A=a))\\
	&\geq \min_{a}\left(1-\frac{1}{p(a\mid s)}\right)\mathcal{H}(S'\mid S=s).\\
    \end{aligned}
\end{equation}

In the context where the environment satisfies the condition of equal probability of taking each action under no intervention, the transfer entropy achieves its lower bound:
\begin{equation}
    \mathcal{T}_{A\rightarrow S}\geq (1-|\mathcal{A}|)\mathcal{H}(S'\mid S=s).
\end{equation}
\end{proof}

\begin{proof}[Proof of Prop.~\ref{prop:max-bound}]
As $\mathcal{T}_{A\rightarrow S}\leq \mathcal{H}(S'\mid S=s)$ has been proved above, the upper bound of $\max_a \mathcal{T}_{A\rightarrow S}$ is also $\mathcal{H}(S'\mid S=s)$.

Let $a_n$ be the action that achieves the minimum transfer entropy in state $s$:
\begin{equation}
\begin{aligned}
\mathcal{H}_{a_n}&=\mathcal{H}(S'|S=s, \text{do}(A=a_n))\\
&\leq\mathcal{H}(S'|S=s, \text{do}(A=a)), ~\forall a\in \mathcal{A}.
\end{aligned}
\end{equation}
According to Eq. \ref{eq:non}, we can further derive that:

\begin{equation}
\begin{aligned}
    \mathcal{H}(S'\mid S=s) &\geq\sum_{a\in\mathcal{A}}p(a\mid s)\mathcal{H}(S'\mid S=s, \text{do}(A=a))\\
    &\geq \sum_{a\in\mathcal{A}}p(a\mid s)\mathcal{H}_{a_n}=\mathcal{H}_{a_n}.
\end{aligned}
\end{equation}
Therefore, the lower bound of the transfer entropy is:

\begin{equation}
\setlength{\arraycolsep}{0pt}
\renewcommand{\arraystretch}{1.5}
\begin{array}{rll}

	\displaystyle\max_a\mathcal{T}_{A\rightarrow S}  &=\mathcal{H}(S'\mid S    & =s)\\
                                           &                         &-\displaystyle\min_a\mathcal{H}(S'\mid S=s, \text{do}(A=a))\\
	  \displaystyle                                    &= \mathcal{H}(S'\mid S   &=s)-\mathcal{H}_{a_n}\\
	  \displaystyle                                    &\geq\mathcal{H}(S'\mid S &=s)-\mathcal{H}(S'\mid S=s)\\
	                                    & = 0.                    &
\end{array}
\end{equation}

\end{proof}

\begin{proof}[Proof of Prop.~\ref{prop:random}]
For the non-interventional state transition probability $p(s'\mid s)$, it can be factorized based on the transition dynamics under each action \( a \in \mathcal{A} \) in the action space. 

\begin{equation}\label{eq:inf}
    p(s' \mid s) = \int_{\mathcal{A}}p(s' \mid s,a)p(a \mid s) da.
\end{equation}
Here, we consider the continuous action space. The proof is the same for discrete action space. Under the condition of non-intervention, we usually assume a uniform distribution over actions:
\begin{equation}\label{eq:equal}
    p(a \mid s) = \frac{1}{|\mathcal{A}|},~\forall a \in \mathcal{A}.
\end{equation}
 where $|\mathcal{A}|$ is the cardinality of the action space. Under the random policy, all actions also have an equal probability. Hence we can substitute Eq. \ref{eq:equal} into Eq. \ref{eq:inf}: 
\begin{align*}
p(s' \mid s) = & \int_{\mathcal{A}}p(s' \mid s,a) \frac{1}{|\mathcal{A}|} da\\
= & \int_{\mathcal{A}}p(s' \mid s,a) \pi_{\text{ran}}(a\mid s) da\\
= & p^{\text{do}(A:=\pi_{\text{ran}})}(s'\mid s).
\end{align*}

In some cases, due to characteristics of the environment, the non-interventional action distribution may follow a Gaussian distribution or another specific distribution, or some actions may rarely be executed. In such cases, we can adjust the random policy to better align with the action distribution of the environment. This adjustment enables us to use sampled data to estimate the non-interventional state transition distribution.
\end{proof}

\section{Clustering Algorithm Results}
In this section, we analyze the effectiveness of the clustering algorithm in estimating causal capacity across different scenarios. In Fig. \ref{demo maze split}, we present the results of calculating the causal capacity of the demo maze using the clustering algorithm, with specific positions selected to illustrate the specific details of the algorithm. In scenarios (b) and (c), where the agent encounters a crossroad and an endpoint, the clustering algorithm clearly captures the number and frequency of available choices, aligning with our expectations regarding causal capacity.
\begin{figure}[ht]
\centering 
    \includegraphics[width=\columnwidth]{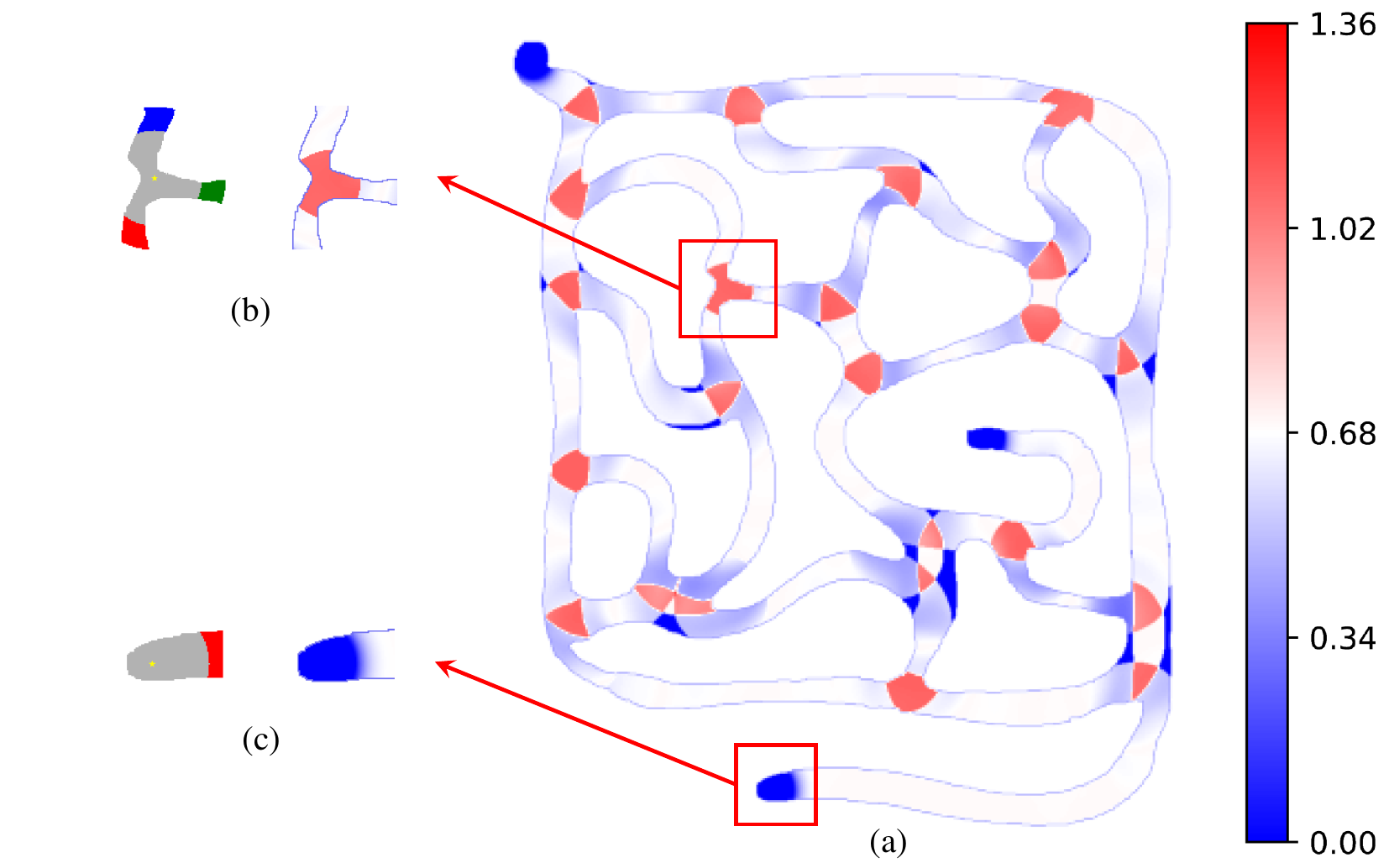}
    \caption{Illustration of the clustering algorithm in the demo maze. (a) Causal capacity of each state in the demo maze. (b) Clustering result at a three-way intersection, calculated based on the central state (marked with a star). (c) Clustering result at an endpoint.}
\label{demo maze split}
\end{figure}
In addition, we selected two special cases to further explain the results of the clustering algorithm. In Fig. \ref{fig.cl1} and Fig. \ref{fig.cl2}, when the agent is at a convergence point with a large area, the clustering algorithm is unable to identify the convergence point because it cannot predict a subsequent path within a single-step transition. This is due to the clustering threshold being designed based on the range of the agent's actions. This approach is practical, as for large convergence points, we are more concerned with the intersections rather than the convergence point itself. The approach is effective when the agent moves to the edge of the convergence point.

Besides convergence points, clustering algorithms can be effectively applied to various special situations. In an indoor environment, such as the one shown in Fig. \ref{fig.cl3}, clustering algorithms can accurately identify the entrances to each room. These entrances represent locations with the highest causal capacity in the indoor environment, offering clear guidance for the agent to achieve goals.

\begin{figure}[ht]
\centering 
\subfigure[]{
    \label{fig.cl1}
    \includegraphics[width=0.32\columnwidth]{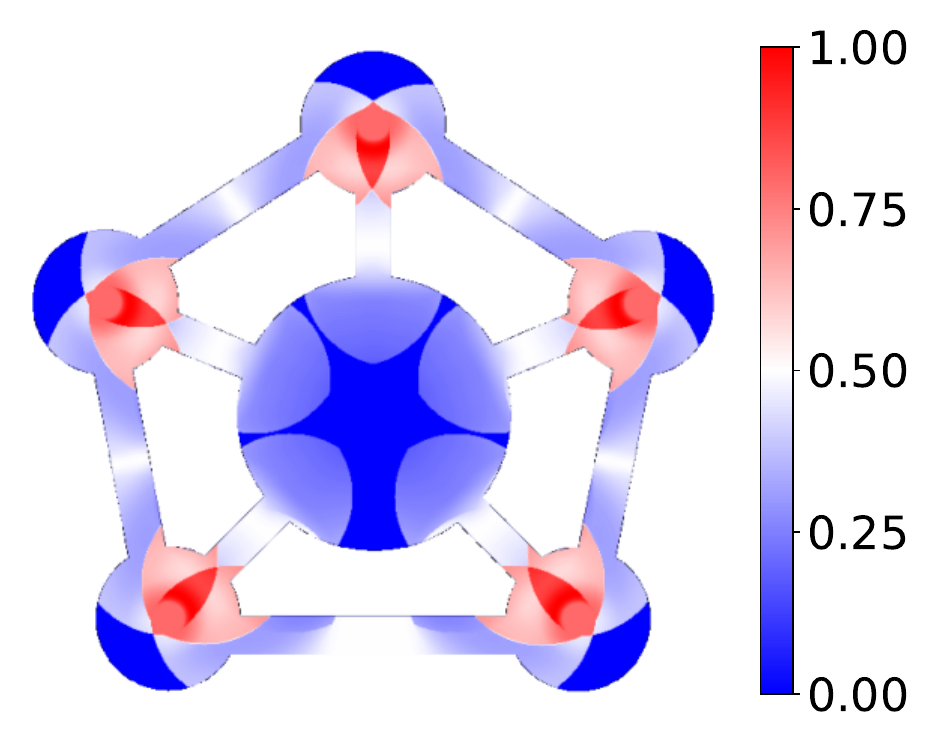}
}
\subfigure[]{
    \label{fig.cl2}
    \includegraphics[width=0.255\columnwidth]{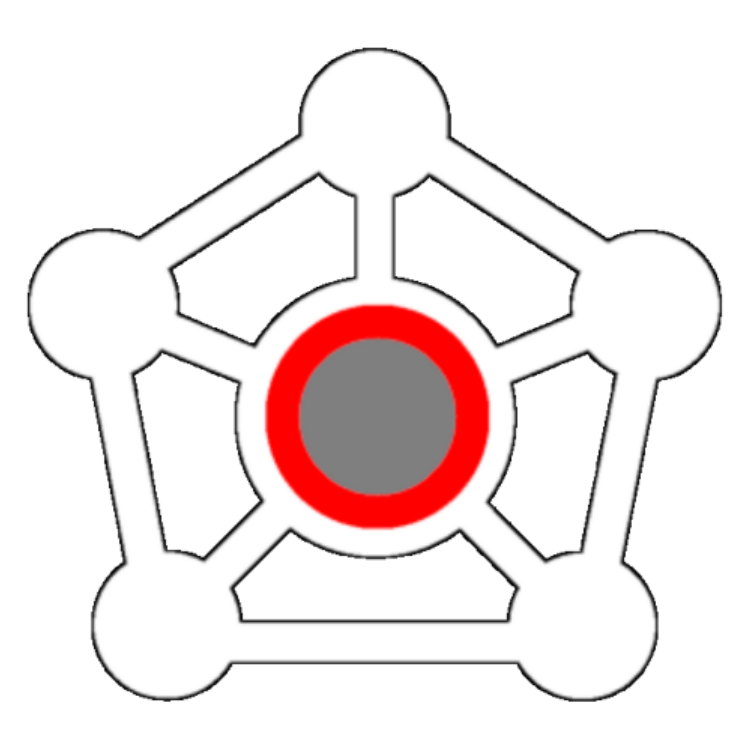}
}
\subfigure[]{
    \label{fig.cl3}
    \includegraphics[width=0.32\columnwidth]{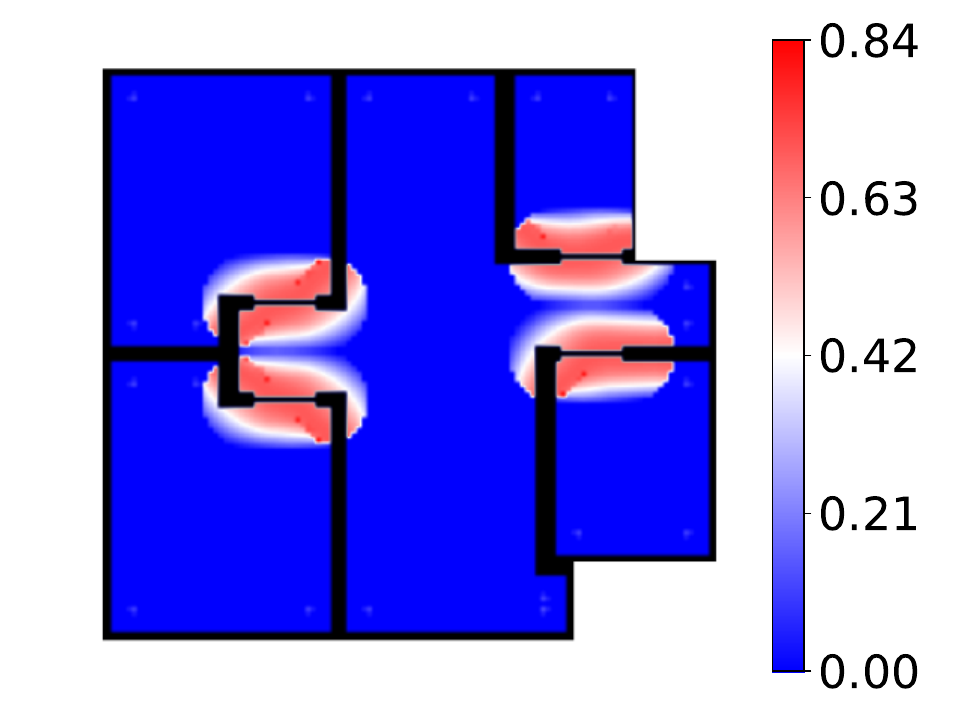}
}
\caption{(a) The causal capacity results in large convergence points. (b) The clustering results in large convergence points. If the agent's action does not lead to a clear state transition, it is considered as having no state transition from the causality perspective, resulting in a single cluster (the red region) around the agent. (c) The causal capacity results in an indoor environment. The entrances to each room are states with high causal capacity, representing the state transitions that occur when entering or exiting each room.}
\end{figure}
\label{app:reward}
In addition to identifying critical states where the agent's actions can determine its future, subgoals with high causal capacity offer the additional advantage in facilitating environment representation through the use of potential-based rewards. Potential-based rewards refer to the rewards the agent receives when transitioning from high-potential states to low-potential states, along with the equivalent penalty incurred when returning to high-potential states. This ensures that the optimal solution of the environment remains unchanged. However, a limitation arises when the agent has only a single final goal to achieve. In such cases, potential-based rewards may not sufficiently support the execution of complex behaviors, such as obstacle avoidance, turning, or ascending/descending stairs. This limitation is demonstrated in Fig. \ref{Potential-based reward}.

\begin{table*}[ht]
  \caption{Environment Settings}
  \label{table:environment}
  \centering
  \begin{tabular}{c|ccccc}
    \toprule
    Scenario  & $\tau_{\text{nei}}$ & $\tau_{\text{adj}}$ & Causal Capacity Threshold & Learning Rate & Episode Length \\
    \midrule
    Maze-medium   & $0.7$ & $1.0$ & $\log2.5$ & $1$e$-4$  & $600$ \\
    Maze-large    & $0.7$ & $1.0$ & $\log2.5$ & $1$e$-4$  & $600$ \\
    Annawan         & $0.8$ & $1.1$ & $\log2.8$ & $5$e$-5$  & $500$ \\
    Applewold       & $0.8$ & $1.1$ & $\log2.8$ & $5$e$-5$  & $500$ \\
    \bottomrule
  \end{tabular}
\end{table*}

With our prediction model, we can predict the corresponding subgoal for each state, allowing us to decompose the environment based on subgoals. Apart from the subgoal itself, there are no other states with high causal capacity within the subgoal's corresponding region, meaning the agent does not need to make complex decisions unless it is at a subgoal. As a result, the potential-based reward function within each subgoal's region is flat and effective. We only need to construct a potential-based reward function in each region and then concatenate them according to the transitions of each subgoal to create a comprehensive reward function that effectively represents the environment.
\begin{figure}[ht]
\centering 
\subfigure[Original rewards]{
    \label{PB.sub.1}
    \includegraphics[height=3cm]{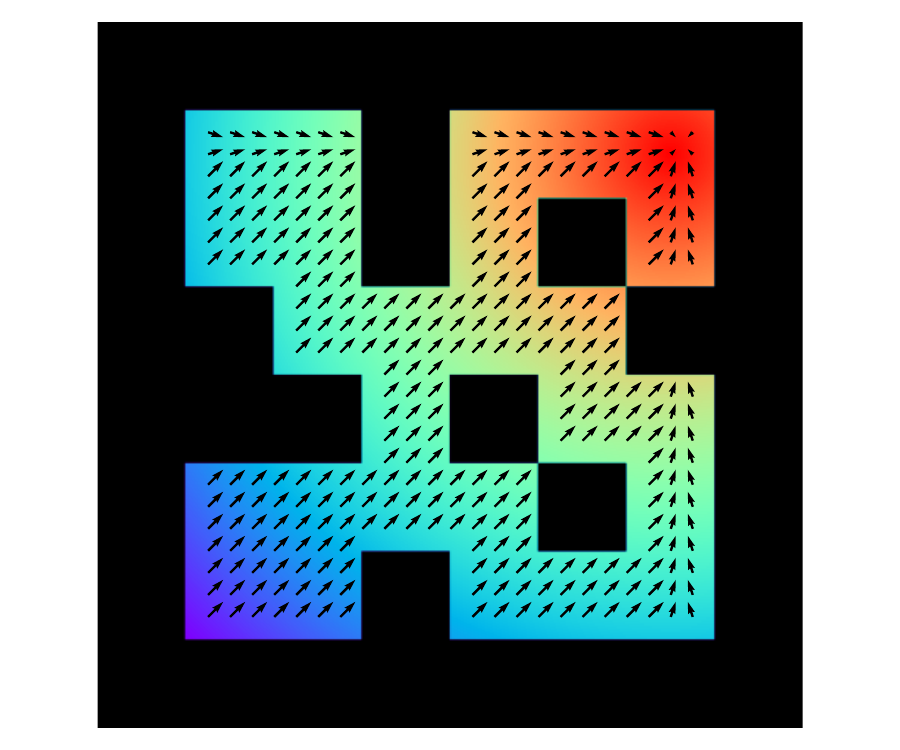}
}
\subfigure[Subgoals]{
    \label{PB.sub.2}
    \includegraphics[height=3cm]{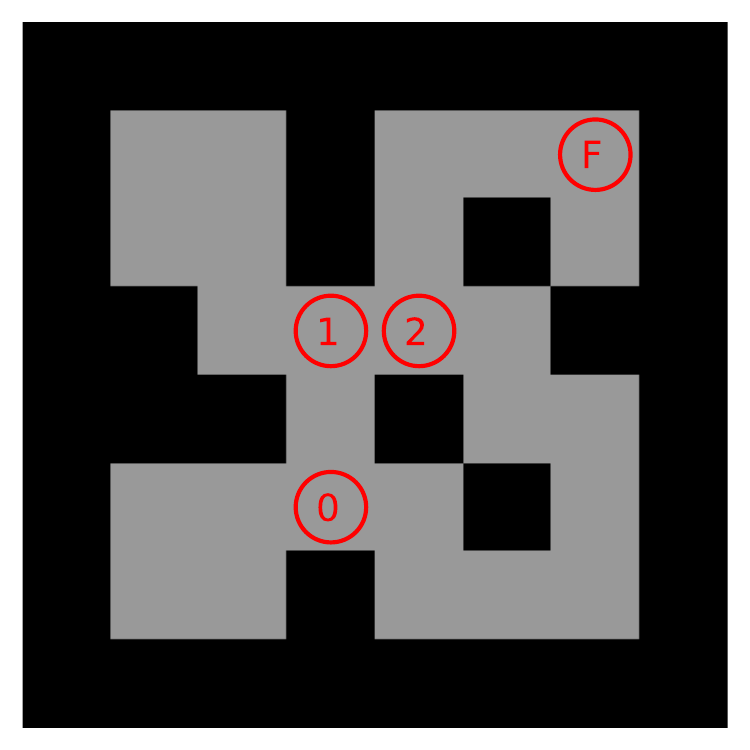}
}

\subfigure[Decomposed rewards]{
    \label{PB.sub.3}
    \includegraphics[height=3cm]{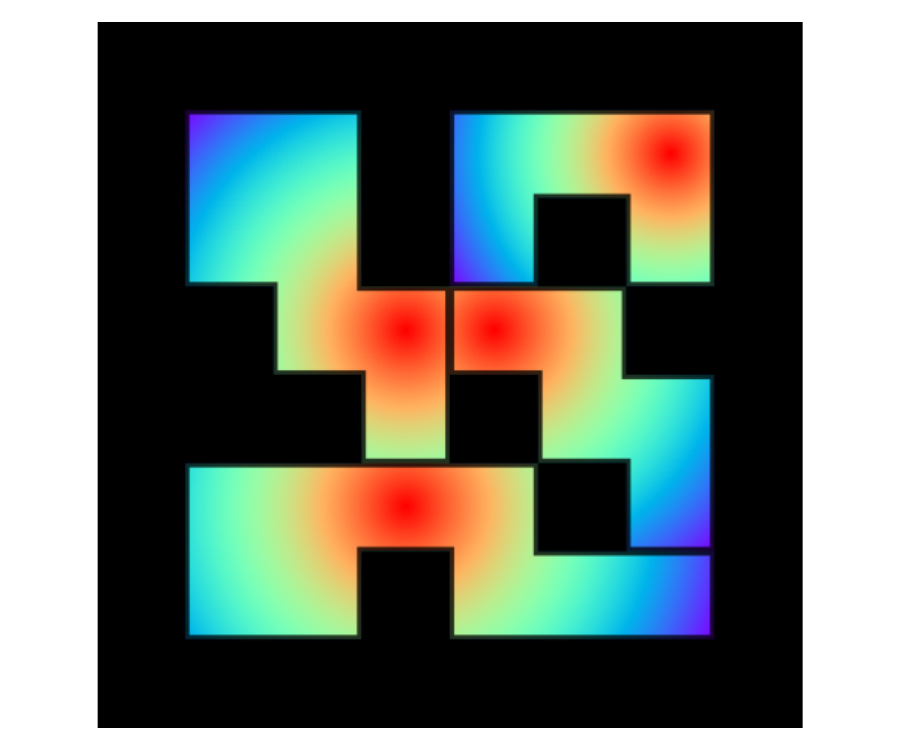}
}
\subfigure[Concatenated rewards]{
    \label{PB.sub.4}
    \includegraphics[height=3cm]{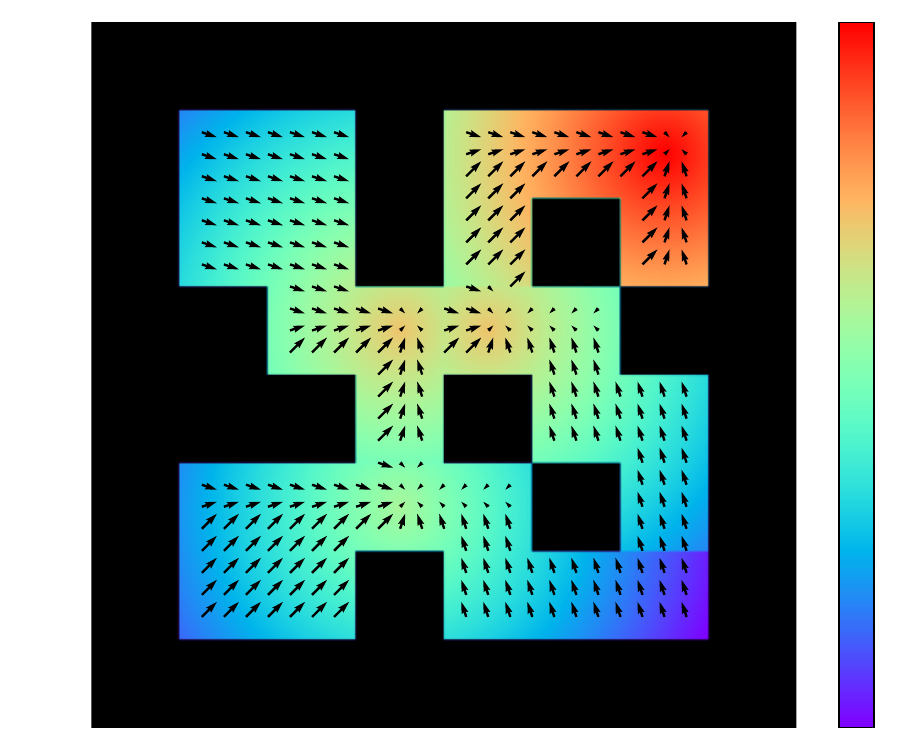}
}
\caption{(a) The original potential-based reward function, with the final goal located in the top-right corner of the environment. Arrows indicate the direction of the current reward gradient. It can be observed that this reward function fails to guide the agent past obstacles towards the final goal. (b) Subgoals calculated based on causal capacity. (c) The potential-based reward for each region after decomposing the state space according to each subgoal. (d) The concatenated reward function obtained by combining the potential-based reward functions of each region. It can be seen that this reward function effectively guides the agent around obstacles.}
\label{Potential-based reward}
\end{figure}

\section{Settings of Experiments}\label{app:settings}
\subsection{Hyperparameter Setting}
Here we present the hyperparameter settings for the experiments and environments in Tab. \ref{table:environment}-\ref{table:prediction}. The experiments were conducted on GeForce RTX 2080Ti and GeForce RTX 3070Ti GPUs.\label{app:gpu}
\subsection{Environments}
\paragraph{MuJoCo}
MuJoCo is a general-purpose physics engine designed for fast and accurate simulation of articulated structures interacting with their environment. It supports a wide range of models and environments, making it a popular benchmark for reinforcement learning experiments.

\begin{table}[ht]
    \centering
  \caption{General Settings}
    \label{table:general}
  \centering
  \begin{tabular}{ll}
    \toprule
    \textbf{Parameter}      & \textbf{Value}    \\ 
    \midrule
    Network Size            & $4 \times 256$    \\
    Gamma                   & $0.99$            \\
    Policy Noise            & $0.2$             \\
    Noise Clip              & $0.5$             \\
    Max Grad Norm           & $0.5$             \\
    Activation Function     & ReLU              \\
    Batch Size              & $1024$            \\
    Replay Buffer Size      & $200 000$         \\
    Replay Buffer Warmup    & $10 000 $         \\
    
    \bottomrule
  \end{tabular}
\end{table}

\begin{table}[htbp]
    \caption{Prediction Model Settings}
      \label{table:prediction}
      \centering
      \begin{tabular}{ll}
        \toprule
        \textbf{Parameter}          & \textbf{Value}    \\ 
        \midrule
        Network Size                & $3 \times 256$    \\
        Embedding Dimension         & $64$              \\
        Learning Rate               & $1\text{e}-3$     \\
        Similarity Function         & cos               \\
        Activation Function         & ReLU              \\
        Batch Size                  & $1000$            \\
        Encoder Training Times      & $10 000$          \\
        Predictor Training Times    & $10 000$          \\
        
        \bottomrule
      \end{tabular}
\end{table}
\paragraph{Habitat}
Habitat is designed for training agents to perform a variety of embodied AI tasks. For our experiments, we utilize the Gibson datasets \cite{xiazamirhe2018gibsonenv} within the Habitat simulator, which model real-world scenarios, including complex terrains such as furniture, rooms, and multi-story buildings. This environment closely mimics real-world settings, providing a more accurate reflection of an agent's ability to understand its environment. To compute the causal capacity of each state, we used the radar position information provided by the environment. Our experiments were conducted on two Habitat maps: Annawan, which features a single floor, and Applewold, a three-floor environment requiring traversal through stairs, making it more challenging. Fig. \ref{fig:3d} shows the 3D models of these two maps.
\begin{figure}[ht]
\centering 
\subfigure[Annawan]{
    \includegraphics[width=0.45\columnwidth]{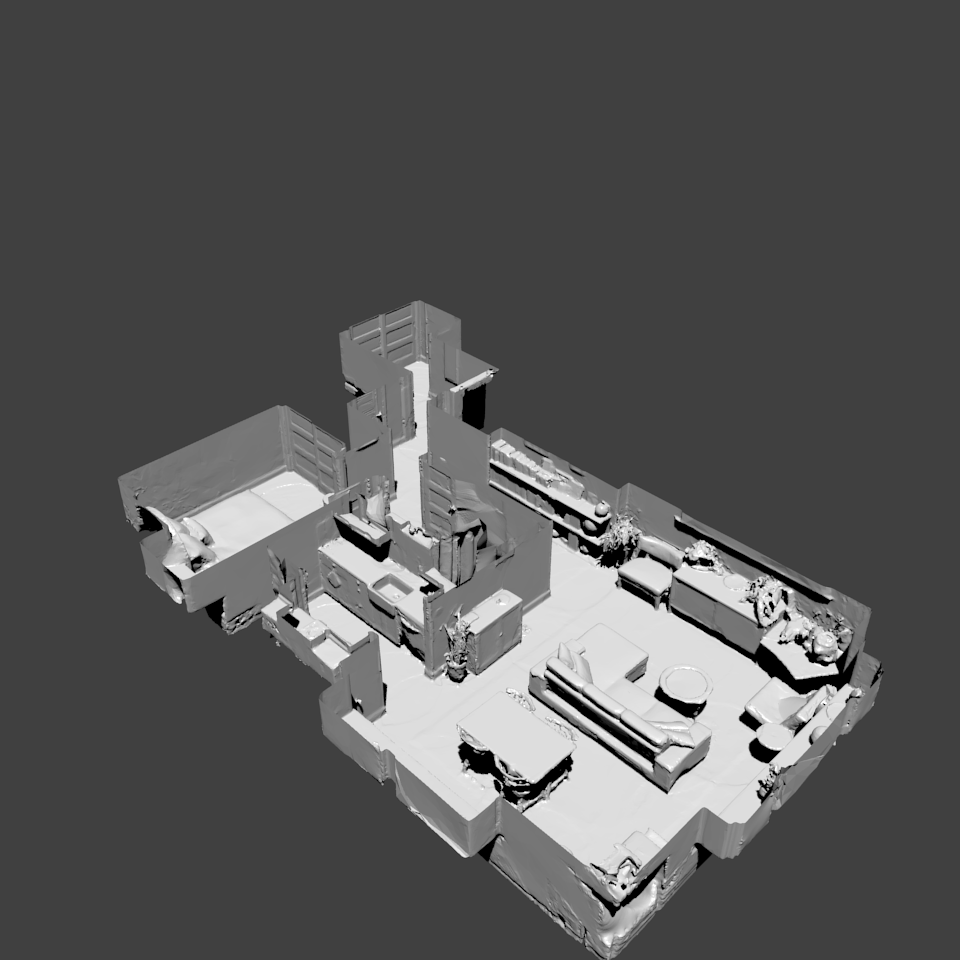}
}
\subfigure[Applewold]{
    \includegraphics[width=0.45\columnwidth]{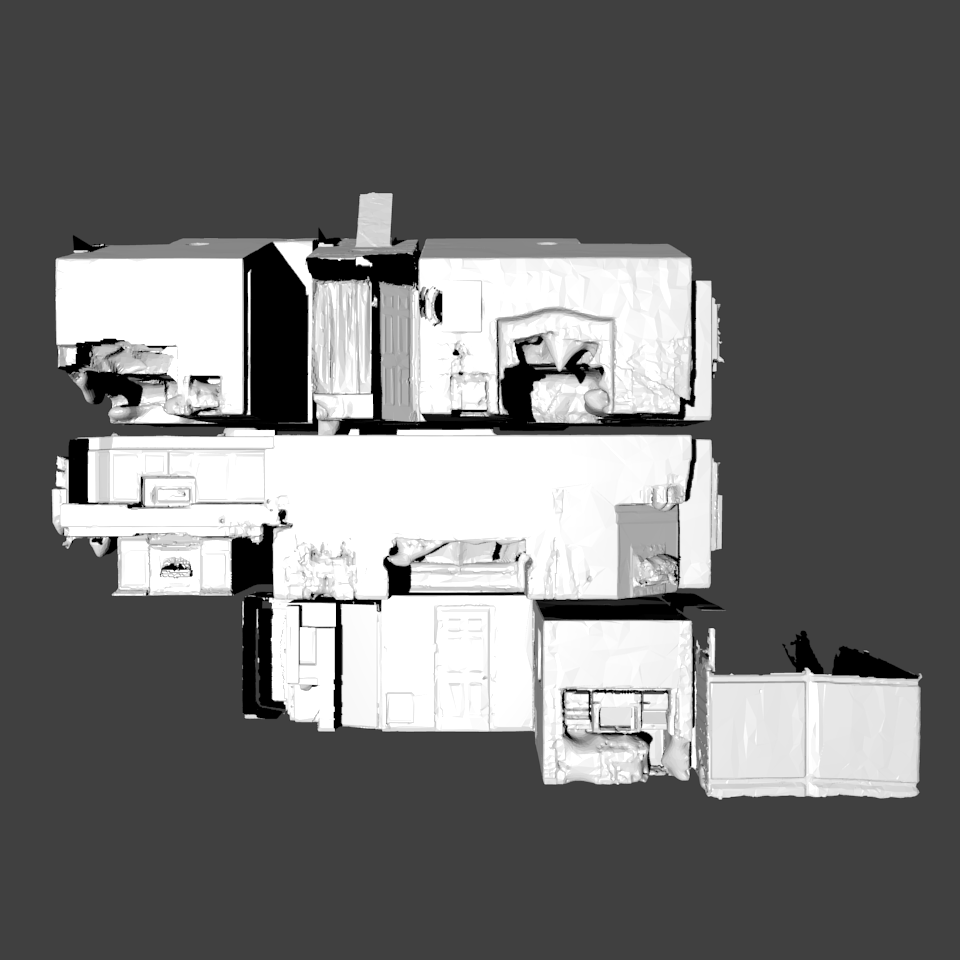}
}
\caption{3D models of two Habitat maps.}
\label{fig:3d}
\end{figure}

\subsection{Baselines}
We chose Proximal Policy Optimization (PPO) and Twin Delayed Deep Deterministic Policy Gradient (TD3) as the two basic algorithms of the GDCC framework. As baselines for comparison, we selected Causal Action Influence (CAI) and Random Network Distillation (RND). Both PPO and TD3 are widely used reinforcement learning algorithms known for their high sample efficiency and broad applicability. PPO is an improvement on the policy gradient method that stabilizes the learning process by limiting the magnitude of policy updates. TD3 is a deep reinforcement learning algorithm for continuous action spaces that addresses the instability of Deep Deterministic Policy Gradient (DDPG) \cite{2015Continuous} by using the delayed policy updates and twin-target Q-learning. CAI is a causal reinforcement learning method, integrates conditional mutual information into policy optimization to enhance the agent's understanding of how its actions influence the environment. RND promotes efficient exploration by providing intrinsic rewards, encouraging agents to discover novel states and actions that reduce environment uncertainty. 

\section{Remaining Results of GDCC}
\label{result}
To further verify the effectiveness of GDCC in estimating causal capacity and predicting subgoals, we present the results of causal capacity calculation and subgoal prediction  for the Maze-medium in Fig. \ref{fig:maze-medium} and Applewold and Capistrano maps from Habitat in Fig. \ref{fig:Applewold} and \ref{fig:Capistrano}.
Both Applewold and Capistrano are multi-story scenarios, we present the results for each floor, as well as the overall 3D predictions.
\begin{figure}
\centering 
\subfigure[Maze-medium causal capacity]{
    \label{EN.sub.medium}
    \includegraphics[height=3.7cm]{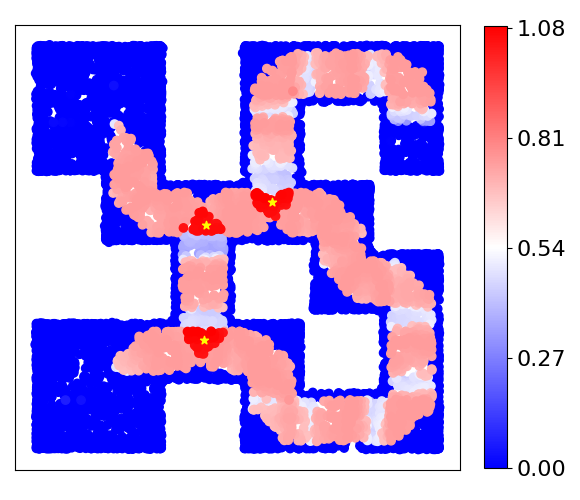}
}
\subfigure[Maze-medium prediction]{
    \label{DE.sub.medium}
    \includegraphics[height=3.7cm]{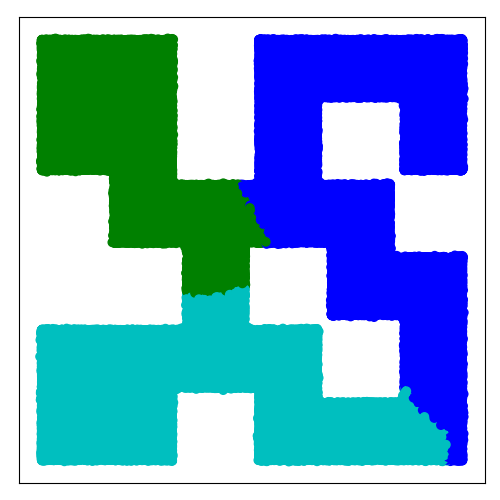}
}
\caption{Results of causal capacity calculation and subgoal prediction of GDCC for the Maze-medium.}
\label{fig:maze-medium}
\end{figure}

\begin{figure*}[htb]
\centering
\subfigure[]{
    \includegraphics[height=2.0cm]{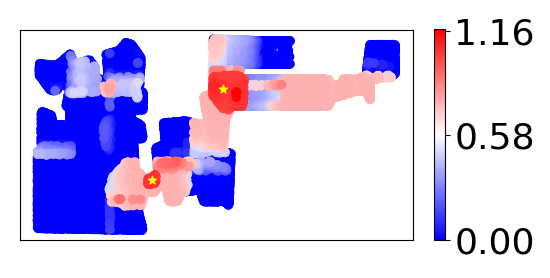}
        \label{fig:Applewold layer 1c}
}
\subfigure[]{
    \includegraphics[height=2.0cm]{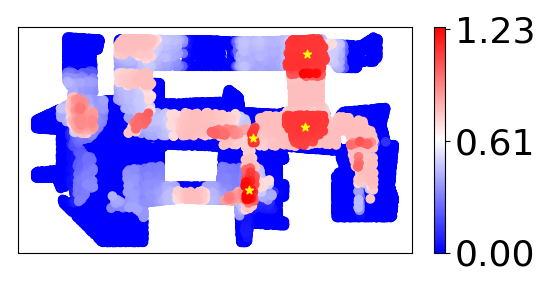}
        \label{fig:Applewold layer 2c}
}
\subfigure[]{
    \includegraphics[height=2.0cm]{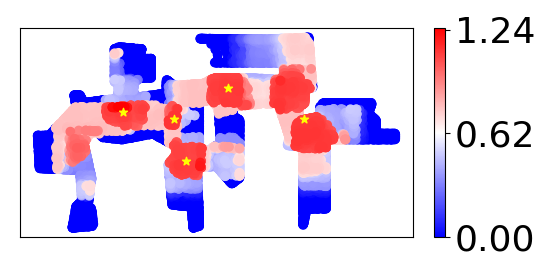}
        \label{fig:Applewold layer 3c}
}
\subfigure[Applewold 3d]{
    \includegraphics[height=2.3cm]{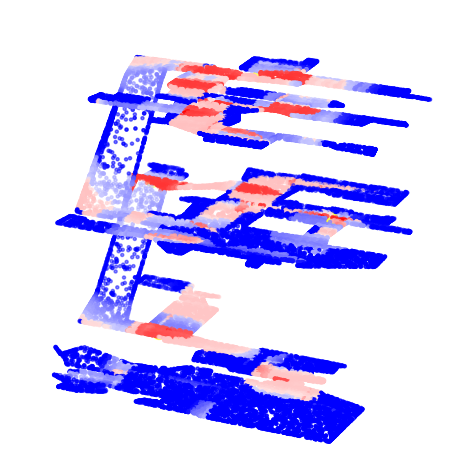}
        \label{fig:Applewold 3dc}
}

\subfigure[]{
    \includegraphics[height=2.0cm]{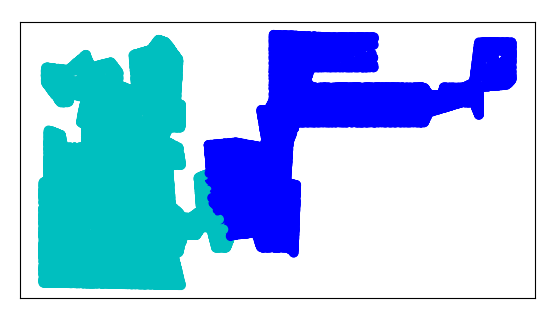}
        \label{fig:Applewold layer 1d}
}
\subfigure[]{
    \includegraphics[height=2.0cm]{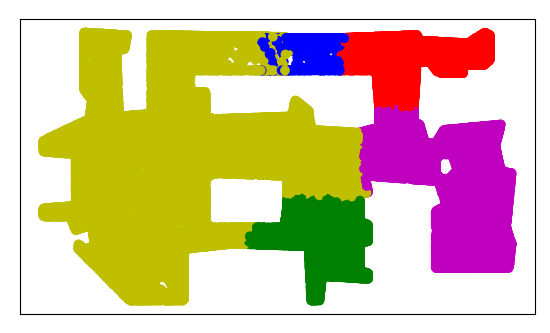}
        \label{fig:Applewold layer 2d}
}
\subfigure[]{
    \includegraphics[height=2.0cm]{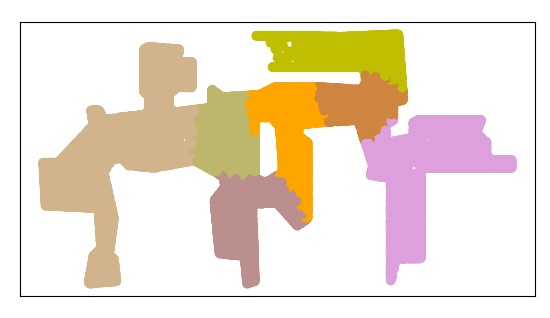}
        \label{fig:Applewold layer 3d}
}
\subfigure[Applewold 3d]{
    \includegraphics[height=2.3cm]{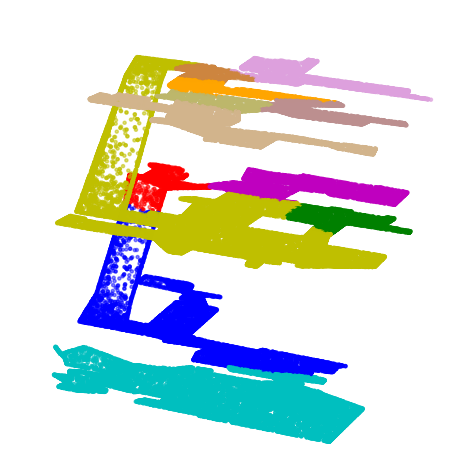}
        \label{fig:Applewold 3dd}
}
\caption{Results of causal capacity calculation \ref{fig:Applewold layer 1c}-\ref{fig:Applewold layer 3c} and subgoal prediction \ref{fig:Applewold layer 1d}-\ref{fig:Applewold layer 3d} of GDCC for each individual floor of Applewold map. \ref{fig:Applewold 3dc} and \ref{fig:Applewold 3dd} show the overall 3D predictions.}
\label{fig:Applewold}
\end{figure*}

\begin{figure*}[!ht]
\centering
\subfigure[Capistrano layer 1]{
    \label{fig:Capistrano layer 1dc}
    \includegraphics[height=3.2cm]{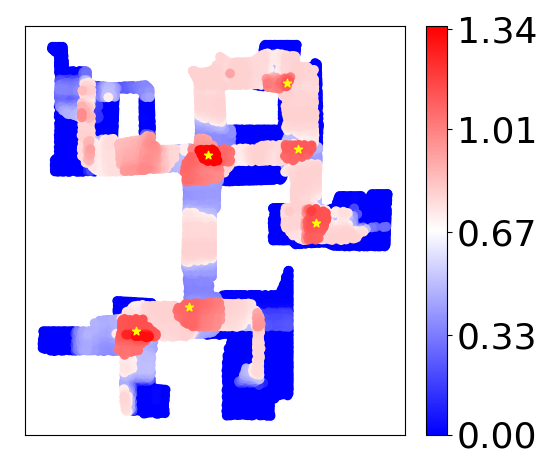}
}
\subfigure[Capistrano layer 2]{
    \label{fig:Capistrano layer 2dc}
    \includegraphics[height=3.2cm]{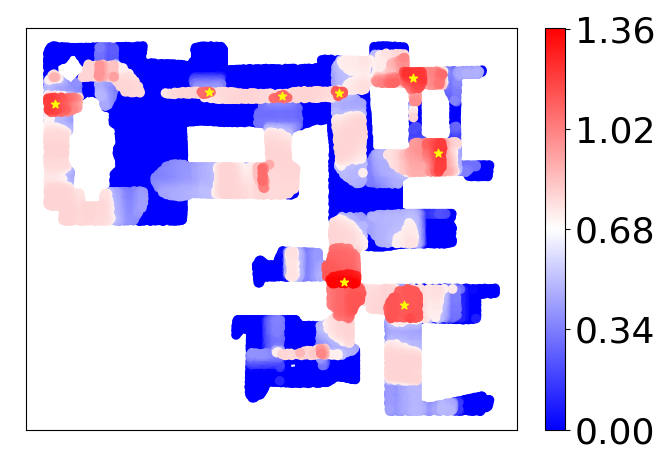}
}
\subfigure[Capistrano 3d]{
    \label{fig:Capistrano 3dc}
    \includegraphics[height=3.2cm]{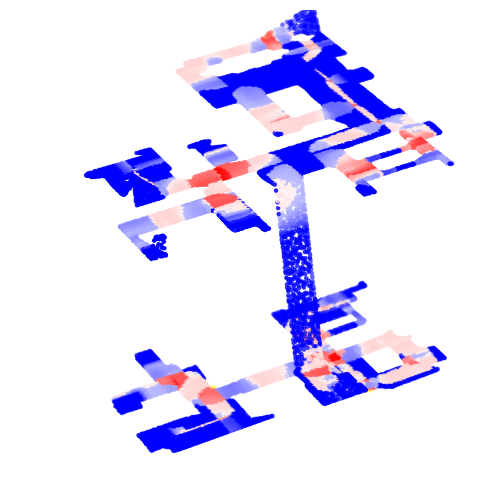}
}

\subfigure[Capistrano layer 1]{
    \label{fig:Capistrano layer 1dd}
    \includegraphics[height=3.5cm]{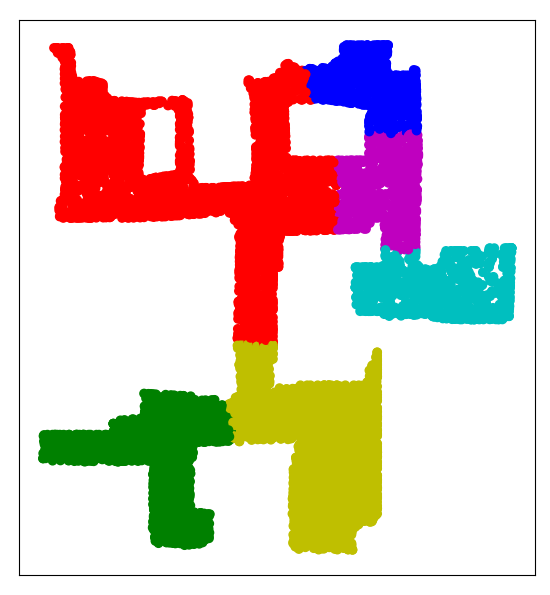}
}
\subfigure[Capistrano layer 2]{
    \label{fig:Capistrano layer 2dd}
    \includegraphics[height=3.5cm]{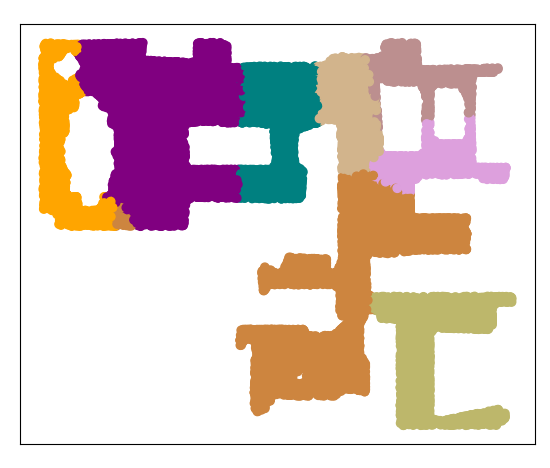}
}
\subfigure[Capistrano 3d]{
    \label{fig:Capistrano 3dd}
    \includegraphics[height=3.5cm]{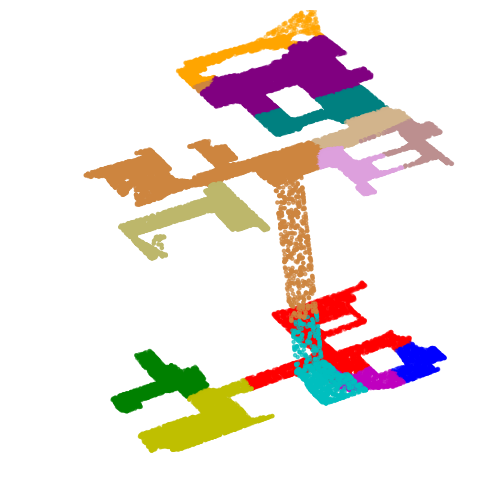}
}
\caption{Results of causal capacity calculation \ref{fig:Capistrano layer 1dc}-\ref{fig:Capistrano layer 2dc} and subgoal prediction \ref{fig:Capistrano layer 1dd}-\ref{fig:Capistrano layer 2dd} of GDCC for each individual floor of Capistrano map. \ref{fig:Capistrano 3dc} and \ref{fig:Capistrano 3dd} show the overall 3D predictions.}
\label{fig:Capistrano}
\end{figure*}
\end{document}